\newtheorem{theorem}{Theorem}
\definecolor{light-gray}{gray}{0.9}
\definecolor{lightblue2}{cmyk}{0.3,0.02,0,0}
\definecolor{lightblue2-text}{cmyk}{0.5,0.2,0,0}
\definecolor{lightblue3}{cmyk}{0.9,0.2,0,0}
\title{A Degeneracy Framework for Scalable Graph Autoencoders}
\author{
Guillaume Salha$^{1,2}$\and Romain Hennequin$^1$\and Viet Anh Tran$^1$\And Michalis Vazirgiannis$^2$
\affiliations
$^1$Deezer Research \& Development, Paris, France\\
$^2$École Polytechnique, Palaiseau, France
\emails
research@deezer.com
}
\begin{document}

\maketitle

\begin{abstract}
In this paper, we present a general framework to scale graph autoencoders (AE) and graph variational autoencoders (VAE). This framework leverages graph degeneracy concepts to train models only from a dense subset of nodes instead of using the entire graph. Together with a simple yet effective propagation mechanism, our approach significantly improves scalability and training speed while preserving performance. We evaluate and discuss our method on several variants of existing graph AE and VAE, providing the first application of these models to large graphs with up to millions of nodes and edges. We achieve empirically competitive results w.r.t. several popular scalable node embedding methods, which emphasizes the relevance of pursuing further research towards more scalable graph AE and VAE.
\end{abstract}

\section{Introduction}

Graphs have become ubiquitous in the Machine Learning community, thanks to their ability to efficiently represent the relations among items in various disciplines. Social networks, biological molecules, and communication networks are some of the most famous examples of real-world data usually represented as graphs. Extracting meaningful information from such structures is a challenging task that has initiated considerable research efforts, aiming to solve various problems ranging from link prediction to influence maximization and node clustering.

In particular, over the last decade, there has been an increasing interest in extending and applying Deep Learning methods to graph structures. Gori et al. \cite{gori2005} and Scarselli et al. \cite{scarselli2009} introduced some of the first graph neural network architectures, and were joined by numerous other contributions aiming to generalize CNNs and the convolution operation to graphs, by leveraging spectral graph theory \cite{bruna2014}, its approximations \cite{defferrard2016,kipf2016-1}, or spatial-based approaches \cite{hamilton2017}. Attempts at extending RNNs, GANs, attention mechanisms, or word2vec-like methods to graphs also recently emerged in the literature. For complete references, we refer to Wu et al.~\cite{wu2019comprehensive}'s recent survey on Deep Learning for graphs.

In this paper, we focus on the graph extensions of autoencoders and variational autoencoders. Introduced in the 1980's \cite{Rumelhart1986}, autoencoders (AE) are efficient methods to learn low-dimensional ``encoded'' representations of some input data in an unsupervised way. These models regained significant popularity over the last decade through neural network formulations~\cite{baldi2012autoencoders}. Furthermore, variational autoencoders (VAE) \cite{kingma2013vae}, described as extensions of AE but actually based on quite different mathematical foundations, also recently emerged as a powerful approach for unsupervised learning from complex distributions. They leverage variational inference techniques, and assume that the input data is the observed part of a larger joint model involving some low-dimensional latent variables. We refer to Tschannen et al. \cite{Tschannen2018recentVAE} for a review of the recent advances in VAE-based representation learning.

As illustrated throughout this paper, many efforts have been recently devoted to the generalization of such models to graphs. Graph AE and VAE models appear as effective node embedding methods, i.e., methods learning a low dimensional vector space representation of nodes, with promising applications to various tasks including link prediction, node clustering, matrix completion, and graph generation. However, most existing models suffer from scalability issues, and all experiments are currently limited to graphs with at most a few thousand nodes. The question of how to scale graph AE and VAE models to larger graphs remains open. We propose to address it in this paper. More precisely, our contribution is threefold:
\begin{itemize}
    \item We introduce a general framework to scale graph AE and VAE models, by optimizing the reconstruction loss (for a graph AE) or the variational lower bound objective (for a graph VAE) only from a dense subset of nodes, before propagating the resulting representations in the entire graph. These nodes are selected using graph degeneracy concepts~\cite{malliaros2019}. Such an approach considerably improves scalability while preserving performance.
    \item We apply this framework to large real-world data and discuss empirical results on ten variants of graph AE and VAE models for two learning tasks. To the best of our knowledge, this is the first application of these models to graphs with up to millions of nodes and edges.
    \item We show that these scaled models have competitive performances w.r.t. several popular scalable node embedding methods. This emphasizes the relevance of pursuing further research toward scalable graph autoencoders.
\end{itemize}
This paper is organized as follows. In Section 2, we formally present graph AE and VAE models, as well as their applications and limits. In Section 3, we introduce our ``degeneracy'' framework to improve the scalability of these models. We describe our experimental analysis and discuss extensions of our approach in Section 4, and we~conclude~in~Section~5.

\section{Preliminaries}

In this section, we review some key concepts related to graph AE and VAE models. Throughout this paper, we consider an undirected graph $\mathcal{G} = (\mathcal{V},\mathcal{E})$ with $|\mathcal{V}| = n$ nodes and $|\mathcal{E}| = m$ edges, without self-loops. We denote by $A$ the  $n~\times~n$ adjacency matrix of $\mathcal{G}$. Nodes can possibly have feature vectors of size $d \in \mathbb{N}^*$, stacked up in an $n \times d$ matrix $X$. If $\mathcal{G}$ is featureless, then $X$ is the $n \times n$ identity matrix $I$.

\subsection{Graph Autoencoders}

Over the last few years, several attempts at extending autoencoders to graph structures with \cite{kipf2016-2} or without \cite{wang2016structural} node features have been presented. One of the most popular graph~AE models is the one from Kipf and Welling~\cite{kipf2016-2}, often abbreviated as GAE in the literature. In essence, GAE and other graph AE models aim to learn (in an unsupervised way) a low dimensional latent vector space a.k.a. node embedding space (\textit{encoding}), from which reconstructing the graph structure (\textit{decoding}) should be possible. The $n \times f$ matrix $Z$ of all latent vectors a.k.a. embedding vectors $z_i$ (where $f \in \mathbb{N}^*$ is the dimension of the latent space) is usually the output of a Graph Neural Network (GNN) processing $A$ and, potentially, $X$. To reconstruct $A$ from $Z$, one could resort to another GNN. However, the GAE model and most of its extensions instead rely on a simpler inner product decoder between latent variables, along with a sigmoid activation function $\sigma(\cdot)$ or, if $A$ is weighted, some more complex thresholding. While being simpler, this decoding involves the multiplication of the two dense matrices $Z$ and $Z^T$, which has a quadratic complexity $O(fn^2)$ w.r.t. the number of nodes. To sum up, denoting by $\hat{A}$ the reconstruction of $A$ obtained from the decoder:
$$\hat{A} = \sigma(ZZ^T) \hspace{10pt} \text{with} \hspace{10pt} Z = \text{GNN}(A,X).$$
The GNN encoder usually includes weights at each layer. During the training phase, they are optimized by minimizing a reconstruction loss aiming to compare $\hat{A}$ to $A$, by gradient descent. This loss is often formulated as a Frobenius loss $\|A - \hat{A}\|_F$, where $\|\cdot\|_F$ denotes the Frobenius matrix norm, or alternatively as a weighted cross entropy loss~\cite{kipf2016-2}.

\subsection{Graph Convolutional Networks}

In practice, Kipf and Welling~\cite{kipf2016-2}, and many extensions of their model, assume that this GNN encoder is a Graph Convolutional Network (GCN). Introduced by the same authors in another article~\cite{kipf2016-1}, GCNs leverage both 1) the features $X$, and 2) the graph structure summarized by $A$. In a GCN with $L$ layers, with input layer $H^{(0)} = X$ and output layer $H^{(L)} =Z$, each layer $l \in \{1,\dots,L\}$ computes:
$$H^{(l)} = \text{ReLU}(D^{-1/2}(A + I) D^{-1/2} H^{(l-1)} W^{(l-1)}),$$
i.e., each layer $l$ averages the feature vectors from layer $l-1$ of the neighbors of each node, and combines them with their own feature vectors along with a ReLU activation function: $\text{ReLU}(x) = \max(x,0)$. This ReLU is absent from the output layer $L$. Here, $D$ denotes the diagonal degree matrix of $A + I$, and $D^{-1/2}(A + I) D^{-1/2}$ is therefore the symmetric normalization of $A+I$. $W^{(0)},\dots,W^{(L-1)}$ are the weight matrices to optimize; they can have different dimensions.

Encoding nodes using a GCN is often motivated by complexity reasons. Indeed, the cost of computing each hidden layer evolves linearly w.r.t. $m$ \cite{kipf2016-1}, and the training efficiency of GCNs can also be improved via importance sampling \cite{chen2018fastgcn}. However, recent works \cite{xu2019powerful} highlighted some fundamental limitations of GCN models, which might motivate researchers to consider more powerful albeit more complex GNN encoders (e.g., \cite{bruna2014} computes actual spectral graph convolutions; their model was extended by \cite{defferrard2016}, approximating smooth filters in the spectral domain with Chebyshev polynomials -- GCN being a faster first-order approximation of \cite{defferrard2016}). The scalable degeneracy framework presented in this paper would facilitate the training of such  complex models as encoders within~a~graph~AE~model.

\subsection{Variational Graph Autoencoders}

Kipf and Welling \cite{kipf2016-2} also introduced Variational Graph Autoencoders (VGAE). This VAE model for graphs considers a probabilistic model on the graph structure, involving a latent variable $z_i$ of dimension $f$ for each node $i \in \mathcal{V}$. It will be interpreted as the node's latent vector in the embedding space. More precisely, denoting by $Z$ the $n\times f$ embedding matrix, the inference model (\textit{encoder}) is defined as $q(Z|X,A) = \prod_{i=1}^n q(z_i|X,A)$ where $q(z_i|X,A) = \mathcal{N}(z_i|\mu_i, \text{diag}(\sigma_i^2))$. Parameters of the Gaussian distributions are learned using two two-layer GCNs. Therefore, $\mu$, the matrix of mean vectors $\mu_i$, is defined as $\mu = \text{GCN}_{\mu}(X,A)$. Also, $\log \sigma = \text{GCN}_{\sigma}(X,A)$. Both GCNs share the same weights in the first layer. Finally, a generative model (acting as a \textit{decoder}) aims to reconstruct $A$ using the inner product between latent variables: $p(A|Z) = \prod_{i=1}^n \prod_{j=1}^n p(A_{ij}|z_i, z_j)$ where $p(A_{ij} = 1|z_i, z_j) = \sigma(z_i^Tz_j)$ and $\sigma(\cdot)$  is the sigmoid function. As in Section 2.1, such a reconstruction has a limiting quadratic complexity w.r.t. $n$. Kipf and Welling~\cite{kipf2016-2} optimize the weights of the two GCNs under consideration by maximizing a tractable variational lower bound (ELBO) of the model's likelihood: 
$$\mathcal{L} = \mathbb{E}_{q(Z|X,A)} \Big[\log
p(A|Z)\Big] - \mathcal{D}_{KL}(q(Z|X,A)||p(Z)),$$
where $\mathcal{D}_{KL}(\cdot, \cdot)$ is the Kullback-Leibler divergence. They perform gradient descent, using the \textit{reparameterization trick}~\cite{kingma2013vae} and choosing a Gaussian prior $p(Z) = \prod_i p(z_i) = \prod_i \mathcal{N}(z_i|0,I)$ \cite{kipf2016-2}.

\subsection{Applications, Extensions and Limits}

GAE and VGAE from Kipf and Welling~\cite{kipf2016-2}, as well as other graph AE and VAE models, have been successfully applied to various tasks, such as link prediction \cite{kipf2016-2}, node clustering \cite{wang2017mgae}, and recommendation \cite{berg2018matrixcomp}. Some works also recently tackled multi-task learning problems \cite{tran2018multitask}, added adversarial training schemes enforcing the latent representation to match the prior \cite{pan2018arga}, or proposed RNN-based graph autoencoders to learn graph-level embedding representations \cite{taheri2018rnn}.

We also note the existence of several applications of graph VAE models to biochemical data and small molecular graphs~\cite{molecule2}. Most of them put the emphasis on plausible graph generation using the decoder. Among these models, the ``GraphVAE''~\cite{simonovsky2018graphvae} can reconstruct both 1) the topological graph information, 2) node-level features, and 3) edge-level features. However, it involves a graph matching step with an $O(n^4)$ complexity that prevents the model from scaling, while being acceptable for molecules with tens of nodes.

To the best of our knowledge, all existing experiments on graph AE and VAE models are restricted to small or medium-size graphs, with at most a few thousand nodes and edges. To this date, most models suffer from scalability issues, because they require the training of complex GNN encoders, and/or because they rely on decoders with a quadratic complexity, as Kipf and Welling~\cite{kipf2016-2}. This important problem has already been raised, but without applications to large graphs. For instance, Grover et al.~\cite{grover2018graphite} proposed ``Graphite'', a graph AE replacing the decoder with reverse message passing schemes (presented as more scalable), but only reported results on medium-size graphs (up to $20,000$ nodes). To summarize this section, graph AE and VAE models showed promising results for small and medium-size datasets, but the question of their extension to large graphs~remains~widely~open.

\section{Scaling up Graph AE/VAE with Degeneracy}

This section introduces our degeneracy framework to scale graph AE and VAE models to large graphs. In this section, we assume that nodes are featureless, i.e., that models only learn representations from the graph structure (equivalently, $X = I$). Node features will be re-introduced in~Section~4.

\subsection{Overview of the Framework}

To deal with large graphs, the key idea of our framework is to optimize the reconstruction loss (for a graph AE) or the variational lower bound objective (for a graph VAE) only from a wisely selected subset of nodes, instead of using the entire graph $\mathcal{G}$ which would be intractable. We proceed as follows:
\begin{enumerate}
    \item Firstly, we identify the nodes on which the graph AE/VAE model should be trained, by computing the $k$-core decomposition of the graph under consideration. The selected subgraph is the so-called $k$-degenerate version of the original one~\cite{malliaros2019}. We justify this choice in Subsection 3.2 and explain how to choose the value~of~$k$.
    \item Then, we train the graph AE/VAE model on this $k$-degenerate subgraph. Hence, we derive latent representation vectors (embedding vectors) for the nodes included in this subgraph, but not for the others.
    \item Regarding the nodes of $\mathcal{G}$ that are not in this subgraph, we infer their latent representations using a simple and fast propagation heuristic, presented in Subsection 3.3.
\end{enumerate}

In a nutshell, training the autoencoder (step 2) would still have a high complexity, but now the input graph would be significantly smaller, which would make the training process tractable. Moreover, we will show that steps 1 and 3 have linear running times w.r.t. the number of edges $m$. Therefore,  as we will experimentally verify in Section 4, our strategy significantly improves speed and scalability, and can effectively process large graphs with millions of nodes and edges.

\subsection{Graph Degeneracy}

In this subsection, we detail the first step of our framework, i.e., the identification of a representative subgraph on which the AE or VAE model should be trained. Our method resorts to the $k$-core decomposition~\cite{batagelj2003,malliaros2019}, a powerful tool to analyze the structure of a graph. Formally, the $k$-core, or $k$-degenerate version of a graph $\mathcal{G}$, is the largest subgraph of $\mathcal{G}$ for which every node has a degree larger or equal to $k$ within the subgraph. Therefore, in a $k$-core, each node is connected to at least $k$ nodes, that are themselves connected to at least $k$ nodes. Moreover, the degeneracy number $\delta^*(\mathcal{G})$ of a graph is the maximum value of $k$ for which the $k$-core is not empty. Nodes from each core $k$, denoted $\mathcal{C}_k \subseteq \mathcal{V}$, form a nested chain, i.e., $\mathcal{C}_{\delta^*(\mathcal{G})} \subseteq \mathcal{C}_{\delta^*(\mathcal{G})-1} \subseteq ... \subseteq \mathcal{C}_{0} = \mathcal{V}$. Figure 1 illustrates an example of a $k$-core decomposition. 
\begin{figure}[t]
  \centering
  \scalebox{0.4}{\begin{tikzpicture}[state/.style={circle, draw, minimum size=2cm}]
    \node[fill=MidnightBlue, text=white, shape=circle,draw=black, minimum size=0.7cm] (A) at (0,0) {\textbf{A}} ;
    \node[fill=MidnightBlue, text=white, shape=circle,draw=black, minimum size=0.7cm] (B) at (0,-3) {};
    \node[fill=MidnightBlue, text=white, shape=circle,draw=black, minimum size=0.7cm] (C) at (1.5,-1.5) {};
    \node[fill=MidnightBlue, text=white, shape=circle,draw=black, minimum size=0.7cm] (D) at (3,-1.5) {\textbf{B}};
    \node[fill=MidnightBlue, text=white, shape=circle,draw=black, minimum size=0.7cm] (E) at (4.5,-1.5) {};
    \node[fill=MidnightBlue, text=white, shape=circle,draw=black, minimum size=0.7cm] (F) at (6,0) {\textbf{C}} ;
\node[fill=MidnightBlue, text=white, shape=circle,draw=black, minimum size=0.7cm] (G) at (6,-3) {} ;
\node[fill=lightblue3, text=white, shape=circle,draw=black, minimum size=0.7cm] (H) at (1.5,1.5) {\textbf{D}} ;
\node[fill=lightblue3, text=white, shape=circle,draw=black, minimum size=0.7cm] (I) at (4.5,1.5) {\textbf{E}} ;
\node[fill=lightblue2, shape=circle,draw=black, minimum size=0.7cm] (J) at (0,3) {} ;
\node[fill=lightblue2, shape=circle,draw=black, minimum size=0.7cm] (K) at (6,3) {\textbf{F}} ;
\node[fill=lightblue2, shape=circle,draw=black, minimum size=0.7cm] (L) at (7.5,3) {\textbf{G}};
\node[fill=lightblue3, shape=circle,draw=black, minimum size=0.7cm] (M) at (7.5,-0.5) {};
\node[fill=lightblue3, shape=circle,draw=black, minimum size=0.7cm] (N) at (7.5,-3) {};
\node[fill=lightblue2, shape=circle,draw=black, minimum size=0.7cm] (O) at (9,-0.5) {};
\node[fill=lightblue2, shape=circle,draw=black, minimum size=0.7cm] (P) at (-1.5,0) {};
\node[fill=lightblue2, shape=circle,draw=black, minimum size=0.7cm] (Q) at (-1.5,-3) {};
\node[fill=lightblue2, shape=circle,draw=black, minimum size=0.7cm] (R) at (-2.5,-1.5) {};
\node[fill=lightblue2, shape=circle,draw=black, minimum size=0.7cm] (S) at (9,3) {\textbf{H}};
\node[fill=lightblue2, shape=circle,draw=black, minimum size=0.7cm] (T) at (-1.5,3) {};
\node[fill=light-gray, shape=circle,draw=black, minimum size=0.7cm] (U) at (-3.5,1.5) {};
\node[fill=light-gray, shape=circle,draw=black, minimum size=0.7cm] (V) at (-3.5,-4) {};
\node[fill=light-gray, shape=circle,draw=black, minimum size=0.7cm] (W) at (10,-5) {};
    \path [] (A) edge node {} (B);
    \path [](B) edge node[left] {} (C);
    \path [](A) edge node[left] {} (C);
    \path [](B) edge node[left] {} (D);
    \path [](A) edge node[left] {} (D);
    \path [](C) edge node[left] {} (D);
    \path [](D) edge node[left] {} (E);
    \path [](D) edge node[left] {} (F);
    \path [](D) edge node[left] {} (G);
    \path [](E) edge node[left] {} (F);
     \path [](E) edge node[left] {} (G);
 \path [](F) edge node[left] {} (G);
 \path [](A) edge node[left] {} (H); 
\path [](D) edge node[left] {} (I);
 \path [](F) edge node[left] {} (I);
 \path [](J) edge node[left] {} (H);
 \path [](H) edge node[left] {} (I);
 \path [](I) edge node[left] {} (K);
\path [](K) edge node[left] {} (L);
\path [](F) edge node[left] {} (M);
\path [](G) edge node[left] {} (N);
\path [](M) edge node[left] {} (N);
\path [](M) edge node[left] {} (O);
\path [](P) edge node[left] {} (A);
\path [](Q) edge node[left] {} (B);
\path [](R) edge node[left] {} (P);
\path [](L) edge node[left] {} (S);
\path [](T) edge node[left] {} (J);
\draw [gray,thick,dashed] (-0.5,0.5) -- (6.75,0.5) -- (6.75,-3.5) -- (-0.5,-3.5) -- (-0.5,0.5);
\draw [gray,thick,dashed] (-1,2) -- (8.25,2) -- (8.25,-4.5) -- (-1,-4.5) -- (-1,2);
\draw [gray,thick,dashed] (-3,3.5) -- (9.5,3.5) -- (9.5,-5.5) -- (-3,-5.5) -- (-3,3.5);
\draw [gray,thick,dashed] (-4,4) -- (10.5,4) -- (10.5,-6.5) -- (-4,-6.5) -- (-4,4);
    \node at (3,-3.25) {\Large{\color{MidnightBlue}{\textbf{3-core}}}};
    \node at (3,-4.25) {\Large{\color{lightblue3}{\textbf{2-core}}}};
    \node at (3,-5.25) {\Large{\color{lightblue2-text}{\textbf{1-core}}}};
    \node at (3,-6.25) {\Large{\color{gray}{\textbf{0-core}}}};
\end{tikzpicture}}
  \caption{A graph $\mathcal{G}$ of degeneracy $3$ and its cores. Some nodes are labeled for the purpose of Section 3.3.}
\end{figure}
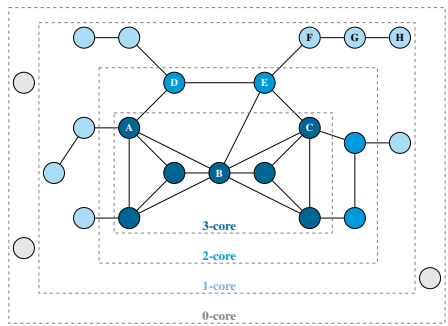

\begin{algorithm}[tb]
\caption{$k$-core Decomposition}
\textbf{Input}: Graph $\mathcal{G} = (\mathcal{V},\mathcal{E})$\\
\textbf{Output}: Set of $k$-cores $\mathcal{C}=\{\mathcal{C}_0,\mathcal{C}_1,...,\mathcal{C}_{\delta^*(\mathcal{G})}\}$

\begin{algorithmic}[1] 
\STATE Initialize $\mathcal{C}=\{\mathcal{V}\}$ and $k = \min_{v \in \mathcal{V}} d(v)$
\FOR{$i = 1$ \textbf{to} $n$} 
\STATE $v =$ node with smallest degree in $\mathcal{G}$
\IF{$d(v) > k$}
\STATE Append $\mathcal{V}$ to $\mathcal{C}$
\STATE $k = d(v)$
\ENDIF
\STATE $\mathcal{V} = \mathcal{V} \setminus \{v\}$ and remove edges linked to $v$
\ENDFOR
\end{algorithmic}
\end{algorithm}

In step 2, we therefore train a graph AE or VAE model, either only on the $\delta^*(\mathcal{G})$-core version of $\mathcal{G}$, or on a larger $k$-core subgraph, i.e., for some $k < \delta^*(\mathcal{G})$. Our justification for this strategy is twofold. The first reason is computational: the $k$-core decomposition can be computed in a linear running time for an undirected graph \cite{batagelj2003}. More precisely, to construct a $k$-core, the strategy is to recursively remove all nodes with degrees lower than $k$ and their edges from $\mathcal{G}$ until no node can be removed, as described in Algorithm 1. It involves sorting nodes by degrees in $O(n)$ time using a variant of bin-sort, and going through all nodes and edges once (see \cite{batagelj2003} for details). The time complexity~is~$O(\max(m,n))$ with $\max(m,n) = m$ in most real-world graphs, and with the same space complexity with sparse matrices. Our second reason to rely on the $k$-core decomposition is that, despite being simple, it has been proven to be a useful tool to extract representative subgraphs over the past years, including for node clustering \cite{giatsidis2014}, keyword extraction in graph-of-words~\cite{tixier2016}, and graph similarity via core-based kernels \cite{nikolentzos2018}. We refer to Malliaros~et~al.~\cite{malliaros2019} for an exhaustive overview of the history, theory, and applications of the $k$-core decomposition.

 \paragraph{On the Selection of $k$.} When selecting $k$, one will usually face a performance/speed trade-off (i.e., choosing small graphs would decrease running times, but also performances), as illustrated in Section 4. Besides, on large graphs ($n~>~50,000$), training a graph AE/VAE on the lowest cores would usually be impossible, due to overly large memory requirements. In our experiments, we will adopt a simple strategy when dealing with such large graphs. We will train our graph AE/VAE models on the largest possible subgraphs. In practice, these subgraphs will be significantly smaller than the original ones ($<5\%$ of nodes). Moreover, when running experiments on medium-size graphs where all cores would be tractable, we will plainly avoid choosing $k < 2$ (since $\mathcal{V} = \mathcal{C}_0 = \mathcal{C}_1$, or $\mathcal{C}_0 \approx \mathcal{C}_1$, in all our graphs). Setting $k=2$, i.e., removing \textit{leaves} from the graph, will empirically appear as a good option, preserving performances w.r.t. models trained on $\mathcal{G}$ while significantly reducing running times by pruning~up~to~50\%~of~nodes.

\subsection{Propagation of Embedding Vectors}

From steps 1 and 2, we compute embedding vectors $z_i$ of dimension $f$ for each node $i$ of the $k$-core. Step 3 aims to infer such representations for the remaining nodes of $\mathcal{G}$, in a scalable way. Nodes are assumed to be featureless, so the only information to leverage comes from the graph structure. Our strategy starts by assigning embedding vectors to nodes directly connected to the $k$-core. We average the values of their embedded neighbors \textit{and} of the nodes being embedded at the same step of the process. For instance, in the graph of Figure 1, to compute $z_D$ and $z_E$ we would solve the system $z_D = \frac{1}{2}(z_A + z_E)$ and $z_E = \frac{1}{3}(z_B + z_C + z_D)$ (or a weighted average, if edges are weighted). Then, we repeat this process on the neighbors of these newly embedded nodes, and so on until no new node is reachable. It is important to consider that nodes $D$ and $E$ are themselves connected. Indeed, node $A$ from the maximal core is also a second-order neighbor of $E$; exploiting this proximity when computing $z_E$ empirically improves performance, as it also strongly impacts all the following nodes whose embedding vectors will then be derived from $z_E$ (in Figure 1, nodes $F, G$ and $H$). 

More generally, let $\mathcal{V}_1$ denote the set of nodes whose embedding vectors are computed, $\mathcal{V}_2$ the set of nodes connected to $\mathcal{V}_1$ and without embedding vectors, $A_1$ the $|\mathcal{V}_1| \times |\mathcal{V}_2|$ adjacency matrix linking $\mathcal{V}_1$ and $\mathcal{V}_2$'s nodes, and $A_2$ the $|\mathcal{V}_2| \times |\mathcal{V}_2|$ adjacency matrix of $\mathcal{V}_2$'s nodes. We normalize $A_1$ and $A_2$ by the total degree in $\mathcal{V}_1 \cup \mathcal{V}_2$, i.e., we divide rows by row sums of the $(A^T_1|A_2)$ matrix row-concatenating $A^T_1$ and $A_2$. We denote by $\tilde{A}_1$ and $\tilde{A}_2$ these normalized matrices. We already learned the $|\mathcal{V}_1|\times f$ embedding  matrix $Z_1$ for nodes in $\mathcal{V}_1$. To implement our strategy, we want to derive a $|\mathcal{V}_2|\times f$ embedding matrix $Z_2$ for nodes in $\mathcal{V}_2$, verifying
$Z_2 = \tilde{A}_1 Z_1 + \tilde{A}_2 Z_2.$ The solution of this system is $Z^* = (I - \tilde{A}_2)^{-1} \tilde{A}_1 Z_1$, which exists since $(I - \tilde{A}_2)$ is strictly diagonally dominant and therefore invertible from the Levy-Desplanques theorem. Unfortunately, the exact computation of $Z^*$ has a cubic complexity. We approximate it by randomly initializing $Z_2$ with values in $[-1,1]$ and iterating $Z_2 = \tilde{A}_1 Z_1 + \tilde{A}_2 Z_2$ until convergence to a fixed point, which is guaranteed to happen exponentially fast as stated below.

\begin{algorithm}[tb]
\caption{Propagation of Embedding Representations}
\textbf{Input}: Graph $\mathcal{G}$, set of embedded nodes $\mathcal{V}_1$, $|\mathcal{V}_1|\times f$ embedding matrix $Z_1$ (already learned), number of iterations $t$\\
\textbf{Output}: An embedding vector for each node of $\mathcal{G}$

\begin{algorithmic}[1] 
\STATE $\mathcal{V}_2=$ set of not-embedded nodes reachable from $\mathcal{V}_1$
\WHILE{$|\mathcal{V}_2| > 0$} 
\STATE $A_1$ = $|\mathcal{V}_1| \times |\mathcal{V}_2|$ adj. matrix linking $\mathcal{V}_1$ and $\mathcal{V}_2$ nodes
\STATE $A_2$ = $|\mathcal{V}_2| \times |\mathcal{V}_2|$ adj. matrix of $\mathcal{V}_2$ nodes \STATE $\tilde{A}_1, \tilde{A}_2 =$ normalized $A_1$, $A_2$ by row sum of $(A^T_1 | A_2)$\STATE Randomly initialize the $|\mathcal{V}_2|\times f$ matrix $Z_2$ \textit{(rows of $Z_2$ will be embedding vectors of $\mathcal{V}_2$'s nodes)}
\FOR{$i = 1$ \textbf{to} $t$}
\STATE $Z_2 = \tilde{A}_1 Z_1 + \tilde{A}_2 Z_2$
\ENDFOR
\STATE $\mathcal{V}_1 = \mathcal{V}_2$
\STATE $\mathcal{V}_2=$ set of not-embedded nodes reachable from $\mathcal{V}_1$
\ENDWHILE
\STATE Assign random vectors to remaining unreachable nodes
\end{algorithmic}
\end{algorithm}

\begin{table*}[!h]
\centering
\begin{footnotesize}
\begin{tabular}{c|c|cc|cccc|c}
\toprule
\textbf{Model}  & \textbf{Size of input} & \multicolumn{2}{c}{\textbf{Mean Perf. on Test Set (in \%)}} & \multicolumn{5}{c}{\textbf{Mean Running Times (in sec.)}}\\
& \textbf{$k$-core} & \tiny \textbf{AUC} & \tiny \textbf{AP} & \tiny $k$-core dec. & \tiny Model train & \tiny Propagation & \tiny \textbf{Total} & \tiny \textbf{Speed gain} \\ 
\midrule
VGAE on $\mathcal{G}$ & - & $83.02 \pm 0.13$ & $\textbf{87.55} \pm \textbf{0.18}$ & - & $710.54$ & - & $710.54$ & - \\
on 2-core & $9,277 \pm 25$  & $\textbf{83.97} \pm \textbf{0.39}$ & $85.80 \pm 0.49$ & $1.35$ & $159.15$ & $0.31$ & $160.81$ & $\times 4.42$ \\
on 3-core & $5,551 \pm 19$ & $\textbf{83.92} \pm \textbf{0.44}$ & $85.49 \pm 0.71$ & $1.35$ & $60.12$ & $0.34$ & $61.81$ & $\times 11.50$\\
on 4-core & $3,269 \pm 30$ & $82.40 \pm 0.66$ & $83.39 \pm 0.75$ & $1.35$ & $22.14$ & $0.36$ & $23.85$ & $\times 29.79$\\
on 5-core & $1,843 \pm 25$ & $78.31 \pm 1.48$ & $79.21 \pm 1.64$ & $1.35$ & $7.71$ & $0.36$ & $9.42$  & $\times 75.43$\\
... & ... & ... & ... & ... & ... & ... & ... & ... \\
on 8-core & $414 \pm 89$ & $67.27 \pm 1.65$ & $67.65 \pm 2.00$ & $1.35$ & $1.55$ & $0.38$ & $\textbf{3.28}$ & $\times \textbf{216.63}$ \\
on 9-core & $149 \pm 93$ & $61.92 \pm 2.88$ & $63.97 \pm 2.86$ & $1.35$ & $1.14$ & $0.38$ & $\textbf{2.87}$ & $\times \textbf{247.57}$ \\
\midrule
DeepWalk & - &$81.04 \pm 0.45$ & $84.04 \pm 0.51$ & - & $342.25$ & - & $342.25$ &- \\
LINE & - & $81.21 \pm 0.31$ & $84.60 \pm 0.37$ & - & $63.52$ & - & $63.52$ &- \\
node2vec & - & $81.25 \pm 0.26$ & $85.55 \pm 0.26$ & - & $48.91$ & - & $48.91$ &- \\
Spectral & - & $83.14 \pm 0.42$ & $86.55 \pm 0.41$ & - & $31.71$ & - & $31.71$ & -\\
\bottomrule
\end{tabular}
\caption{Link prediction on the Pubmed graph ($n=19,717$, $m =44,338$), using the VGAE model, its $k$-core variants, and baselines.}
\end{footnotesize}
\end{table*}

\begin{theorem}
  Let $Z^{(t)}$ denote the $|\mathcal{V}_2|\times f$ matrix obtained by iterating $Z^{(t)} = \tilde{A}_1 Z_1 + \tilde{A}_2 Z^{(t-1)}$ $t$ times starting from $Z^{(0)}$, and let $\|\cdot\|_F$ be the Frobenius norm. Then,~exponentially~fast, 
  $$\|Z^{(t)} -  Z^* \|_F \xrightarrow[{t \rightarrow +\infty}]{} 0.$$ 
\end{theorem}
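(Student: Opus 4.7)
My plan is to reduce the statement to a standard linear contraction argument. First I would introduce the error matrix $E^{(t)} = Z^{(t)} - Z^*$ and subtract the fixed-point identity $Z^* = \tilde{A}_1 Z_1 + \tilde{A}_2 Z^*$ from the recursion, which immediately gives $E^{(t)} = \tilde{A}_2 E^{(t-1)}$ and hence $E^{(t)} = \tilde{A}_2^{\,t} E^{(0)}$. The whole claim then reduces to showing that $\tilde{A}_2^{\,t}$ decays to zero exponentially in a convenient matrix norm.

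The key step, and the one I expect to be the only real obstacle, is to exhibit a norm in which $\tilde{A}_2$ is a strict contraction. I would work with the induced $\infty$-norm (maximum absolute row sum). By the normalization in step 5 of Algorithm~2, the $i$-th row sum of $\tilde{A}_2$ equals $\sum_j (A_2)_{ij} / \bigl(\sum_j (A_1^T)_{ij} + \sum_j (A_2)_{ij}\bigr)$. Because each node of $\mathcal{V}_2$ is, by construction, a neighbor of at least one already-embedded node of $\mathcal{V}_1$, the quantity $\sum_j (A_1^T)_{ij}$ is strictly positive for every $i$. Consequently every row of $\tilde{A}_2$ sums to a value strictly less than $1$, and setting $c = \|\tilde{A}_2\|_\infty < 1$ and using submultiplicativity of the induced norm yields $\|E^{(t)}\|_\infty \leq c^{\,t}\,\|E^{(0)}\|_\infty$.

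To finish in the Frobenius norm I would invoke the finite-dimensional norm equivalence $\|X\|_F \leq \sqrt{|\mathcal{V}_2|}\,\|X\|_\infty$, together with the fact that $\|E^{(0)}\|_\infty$ is finite (the initialization lies in $[-1,1]^{|\mathcal{V}_2|\times f}$ and $Z^*$ is a fixed matrix, guaranteed to exist by the Levy--Desplanques argument already given in the paper). This produces the explicit exponential bound
$$\|Z^{(t)} - Z^*\|_F \;\leq\; \sqrt{|\mathcal{V}_2|}\;c^{\,t}\;\|Z^{(0)} - Z^*\|_\infty,$$
which clearly tends to $0$ exponentially fast as $t \to +\infty$, proving the theorem.

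The subtlety is genuinely concentrated in the strict inequality $c < 1$: it hinges simultaneously on the definition of $\mathcal{V}_2$ (so that each corresponding row of $A_1^T$ is nonzero) and on the specific choice of normalizing by the row sums of the concatenated matrix $(A_1^T \mid A_2)$ rather than of $A_2$ alone. Everything else is routine: if one preferred to avoid the $\infty$-norm bookkeeping, the same conclusion would follow from $\rho(\tilde{A}_2) \leq \|\tilde{A}_2\|_\infty < 1$ together with Gelfand's formula, yielding $\|\tilde{A}_2^{\,t}\|_F \leq C_\varepsilon (\rho(\tilde{A}_2)+\varepsilon)^t$ for any $\varepsilon > 0$ with $\rho(\tilde{A}_2)+\varepsilon < 1$.
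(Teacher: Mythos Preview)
Your proof is correct and shares its skeleton with the paper's: both first derive the error recursion $Z^{(t)}-Z^*=\tilde{A}_2^{\,t}(Z^{(0)}-Z^*)$, and both identify as the crucial fact that every node of $\mathcal{V}_2$ has at least one neighbor in $\mathcal{V}_1$, so that each row of $\tilde{A}_2$ sums to strictly less than $1$.

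The difference lies in how this row-sum observation is turned into exponential decay. The paper works spectrally: it eigendecomposes $\tilde{A}_2=PDP^{-1}$ (invoking symmetry), applies Perron--Frobenius to bound the spectral radius by the maximum row sum, and then controls $\|\tilde{A}_2^{\,t}\|_F$ through $\|D^t\|_F$. You instead read the row-sum bound directly as $\|\tilde{A}_2\|_\infty=c<1$, use submultiplicativity of the induced $\infty$-norm to get $\|E^{(t)}\|_\infty\le c^t\|E^{(0)}\|_\infty$, and only at the end pass to the Frobenius norm by a dimension-dependent equivalence. Your route is more elementary and more robust: it needs neither diagonalizability nor symmetry of $\tilde{A}_2$ (a point on which the paper's argument is in fact delicate, since row-normalizing a symmetric matrix typically destroys symmetry), and it delivers an explicit contraction constant $c$. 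The paper's spectral argument, when it applies, yields the sharper asymptotic rate $\rho(\tilde{A}_2)\le c$, which is exactly the refinement you sketch in your final Gelfand-formula remark.
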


\begin{proof}
We have $Z^{(t)} - Z^*  = [\tilde{A}_1 Z_1 + \tilde{A}_2 Z^{(t-1)}] - [ \tilde{A}_2 Z^* + (I - \tilde{A}_2) Z^*] = \tilde{A}_1 Z_1 + \tilde{A}_2 Z^{(t-1)} - \tilde{A}_2 Z^* - (I-\tilde{A}_2)(I-\tilde{A}_2)^{-1} \tilde{A}_1 Z_1 = \tilde{A}_2 (Z^{(t - 1)} - Z^*)$. So, $Z^{(t)} -  Z^* =  \tilde{A}^{t}_2 (Z^{(0)} -  Z^*)$. Then, from the~Cauchy-Schwarz~inequality, we have:
$$\|Z^{(t)} -  Z^*\|_F = \|\tilde{A}^{t}_2 (Z^{(0)} -  Z^*)\|_F \leq \|\tilde{A}^{t}_2 \|_F  \|Z^{(0)} -  Z^*\|_F.$$
Furthermore, $\tilde{A}^{t}_2 = P D^{t} P^{-1}$ , with $\tilde{A}_2 = P D P^{-1}$ the eigendecomposition of $\tilde{A}_2$. For the diagonal matrix $D^{t}$ we have $\|D^{t}\|_F = \sqrt{\sum_{i=1}^{|\mathcal{V}_2|} |\lambda^t_i|^{2}} \leq \sqrt{|\mathcal{V}_2|} (\max_i |\lambda_i|)^t$ with $\lambda_i$ the $i$-th eigenvalue of $\tilde{A}_2$. Since $\tilde{A}_2$ has non-negative entries, we derive from the Perron–Frobenius theorem~\cite{lovasz2007} that a)~the maximum absolute value among all eigenvalues of $\tilde{A}_2$ is reached by a nonnegative real eigenvalue, and b)~that $\max_i \lambda_i$ is bounded above by the maximum degree in $\tilde{A}_2$'s graph. By definition, each node in $\mathcal{V}_2$ has at least one connection to $\mathcal{V}_1$; moreover, rows of $\tilde{A}_2$ are normalized by row sums of $(A^T_1|A_2)$, so the maximum degree in $\tilde{A}_2$'s graph is strictly lower than 1. We conclude from a) and b) that $0 \leq |\lambda_i| < 1$ for all $i \in \{1,...,|\mathcal{V}_2|\}$, so $0 \leq \max_i |\lambda_i| < 1$. This result implies that $\|D^{t}\|_F \rightarrow_{t} 0$ exponentially fast, and so does $\|\tilde{A}^{t}_2 \|_F \leq \|P\|_F \|D^{t}\|_F \|P^{-1}\|_F$, then $\|Z^{(t)} - Z^*\|_F$.
 \end{proof}

Algorithm 2 summarizes our propagation process. If some nodes are unreachable by such a process because $\mathcal{G}$ is not connected, then we assign them random vectors. Using sparse representations for $\tilde{A}_1$ and $\tilde{A}_2$, the memory requirement is $O(m +nf)$, and the computational complexity of each evaluation of line 8 also increases linearly w.r.t. the number of edges $m$ in the graph. Moreover, in practice, $t$ is small: we set $t=10$ in our experiments (we illustrate the impact of $t$ in Annex 2). The number of iterations in the while loop of line 2 corresponds to the size of the longest shortest path connecting a node to the $k$-core, a number bounded above by the diameter of the graph, which increases at a $O(\log(n))$ speed in numerous real-world graphs \cite{chakrabarti2006}. In the next section, we will empirically check our claim that steps 1 and 3 run linearly and therefore scale to large graphs with millions of nodes.

\section{Empirical Analysis}
In this section, we empirically evaluate our framework. While all main results are presented here, we report additional and more complete tables in the~supplementary~material.

\subsection{Experimental Setting}

\paragraph{Datasets.} We provide experiments on the three medium-size graphs used by Kipf~and~Welling~\cite{kipf2016-2}: Cora ($n = 2,708$ and $m = 5,429$), Citeseer ($n = 3,327$ and $m = 4,732$) and Pubmed ($n=19,717$ and $m =44,338$); and on two large graphs from Stanford's SNAP project: the Google web graph ($n = 875,713$ and $m=4,322,051$) and the US Patent citation network ($n = 2,745,762$ and $m = 13,965,410$). Details, statistics, and $k$-core decompositions of these graphs are reported in Annex 1. Nodes from Cora, Citeseer, and Pubmed have bag-of-words feature vectors. All graphs are unweighted and, as in \cite{kipf2016-2}, we ignore the~edges'~directions.

\paragraph{Tasks.}

We consider two learning tasks. The first one, as in \cite{kipf2016-2}, is a \textit{link prediction} task. We train models on incomplete versions of graphs where some edges were randomly removed. We create some validation and test sets from the removed edges and the same number of randomly sampled pairs of unconnected nodes. We evaluate the model's ability to classify edges (i.e., the true $A_{ij} = 1$) from non-edges ($A_{ij} = 0$), using the reconstructed value $\hat{A}_{ij} = \sigma(z^T_i z_j)$. The validation and test sets gather $5\%$ and $10\%$ of edges (respectively $2\%$ and $3\%$) for medium-size (resp. large-size) graphs. The (incomplete) training adjacency matrix is used when running Algorithm 2. The validation set is only used to tune hyperparameters. We compare model performances using the \textit{Area Under the Receiver Operating Characteristic (ROC) Curve} (AUC) and \textit{Average Precision} (AP) scores. The second task consists in \textit{clustering nodes} from embedding representations $z_i$. We run $k$-means in embedding spaces, and compare clusters to some ground-truth communities (see Subsection~4.2) using normalized \textit{Mutual~Information}~(MI)~scores.

\paragraph{Models.}
We apply our degeneracy framework to ten graph autoencoders: the seminal GAE and VGAE models~\cite{kipf2016-2} with two-layer GCNs; two deeper variants of GAE and VGAE with 3-layer GCNs; Graphite and Variational Graphite \cite{grover2018graphite}; Pan et al.~\cite{pan2018arga}'s adversarially regularized models (denoted ARGA and ARVGA); and ChebAE/ChebVAE, two variants of GAE/VGAE with ChebNets~\cite{defferrard2016} of order 3 instead of GCNs. We train all models during $200$ epochs. Models return 16-dimensional embedding vectors (32 for Patent). We also compare to the DeepWalk \cite{perozzi2014deepwalk}, LINE \cite{tang2015line}, and node2vec~\cite{grover2016node2vec} node embedding methods, that are explicitly presented as scalable methods. We tune hyperparameters from AUC scores obtained on the validation set (see Annex 2). We also consider a spectral decomposition baseline (embedding axes are the first eigenvectors of $\mathcal{G}$'s Laplacian matrix) and, for node clustering, the scalable ``Louvain'' method~\cite{blondel2008louvain}. We use Python, especially the Tensorflow library, training models on an NVIDIA GTX 1080 GPU and running other operations on a double~Intel~Xeon~Gold~6134~CPU. 
\subsection{Results}

\begin{table}
\centering
\begin{footnotesize}
\begin{tabular}{c|cc|c}  
\toprule
\textbf{Model} & \multicolumn{2}{c}{\textbf{Perf. on Test Set (in \%)}} & \textbf{Total}\\
\tiny (using framework, k=17) & \tiny \textbf{AUC} & \tiny \textbf{AP} &  \textbf{run. time} \\ 
\midrule
GAE   & $94.02 \pm 0.20$ & $ 94.31\pm 0.21$ &  $23 \text{min}$ \\
VGAE   & $93.22 \pm 0.40$ & $93.20 \pm 0.45$ & $\textbf{22 \text{min}}$  \\
DeepGAE   & $93.74 \pm 0.17$ & $92.94 \pm 0.33$ & $24 \text{min}$ \\
DeepVGAE   & $93.12 \pm 0.29$ & $92.71 \pm 0.29$ & $24 \text{min}$ \\
Graphite   & $93.29 \pm 0.33$ & $93.11\pm 0.42$ & $23 \text{min}$ \\
Var-Graphite   & $93.13 \pm 0.35$ & $92.90 \pm 0.39$ &$\textbf{22 \text{min}}$ \\
ARGA  & $93.82 \pm 0.17$ & $94.17 \pm 0.18$ & $23\text{min}$ \\
ARVGA  & $93.00 \pm 0.17$ & $93.38 \pm 0.19$ & $23\text{min}$ \\
ChebGAE   & $\textbf{95.24} \pm \textbf{0.26}$ & $\textbf{96.94} \pm \textbf{0.27}$ & $41\text{min}$ \\
ChebVGAE   & $95.03 \pm 0.25$ & $96.58 \pm 0.21$ & $40\text{min}$ \\
\midrule
node2vec on $\mathcal{G}$& $94.89 \pm 0.63$ & $96.82 \pm 0.72$ & $4\text{h}06$ \\
\textit{(best baseline)} & & & \\
\bottomrule
\end{tabular}
\caption{Link prediction on the Google graph ($n = 875K$, $m=4.3M$) using our framework on the $17$-core ($|\mathcal{C}_{17}|=23,787~\pm~208$) on all graph AE/VAE variants.}
\end{footnotesize}
\end{table}

\paragraph{Medium-Size Graphs.} For Cora, Citeseer, and Pubmed, we apply our framework to all possible subgraphs from the $2$-core to the $\delta^*(\mathcal{G})$-core. We also train models on entire graphs, which is still tractable. Table 1 reports mean AUC and AP scores and their standard errors over 100 runs (training graphs and masked edges are different at each run) along with mean running times, for the \textit{link prediction} task on Pubmed using a VGAE. Sizes of $k$-cores vary over runs due to the edge masking process in \textit{link prediction}; this phenomenon does not occur for \textit{node clustering}. Overall, our framework significantly improves running times w.r.t. training the VGAE on $\mathcal{G}$. Running times decrease when $k$ increases (up to $\times 247.57$ speed gain in Table 1), which was expected since the $k$-core subgraph becomes smaller. We observe this improvement on all other datasets, on both tasks, and for all graph AE/VAE variants (see Annexes 2 and 3). Also, for the low cores, especially the 2-core, performances are consistently competitive w.r.t. models trained on entire graphs, and sometimes better both for \textit{link prediction} (e.g., $+0.95$ point in AUC for 2-core in Table 1) and \textit{node clustering}. This highlights the relevance of our propagation process, and the fact that training models on smaller graphs is easier. Choosing higher cores leads to even faster running times at the price of~a~loss~in~performance.

\paragraph{Large Graphs.} Table 2 summarizes \textit{link prediction} results on Google, for all graph AE/VAE variants trained on the $17$-core using our framework. Also, in Table 3, we summarize \textit{node clustering} results on Patent. Ground-truth clusters are six roughly balanced patent categories. We report performances for all graph AE/VAE variants trained on the $15$-core. Core numbers were selected according to the tractability criterion of Section 3. We average scores over 10 runs. Overall, we reach similar conclusions w.r.t. medium-size graphs, both in terms of good performance and of scalability. However, comparing our results to models trained on $\mathcal{G}$, i.e., without using our framework, is impossible on these large graphs due to overly large memory requirements. We, therefore, compare results to those obtained on several other cores (see Annexes 2 and 3 for complete tables), illustrating once again the inherent performance/speed trade-off when choosing $k$ and validating previous insights.

\paragraph{Graph AE/VAE Variants.} For both tasks, we note that models leveraging adversarial training techniques (ARGA/ARVGA), Graphite's decoder, or a ChebNet-based encoder, tend to perform slightly better than others (e.g., a top $95.24\%$ AUC score for ChebGAE in Table 2). This indicates the relevance of these approaches on our two tasks. 

\paragraph{Baselines.} Additionally, our framework is competitive w.r.t. (non-AE/VAE) baselines. We are significantly faster on large graphs while achieving comparable or outperforming performances in most experiments, which emphasizes the interest of scaling graph AE/VAE models. Furthermore, we specify that 64 dimensions were needed to reach stable results on baselines, against 16 for autoencoders. This suggests that graph AE/VAE models are superior when encoding information in a (very) low-dimensional embedding space. On the other hand, some baselines, e.g., Louvain and node2vec, cluster nodes from Cora and Pubmed more effectively ($+10$ points in MI for Louvain on Cora), which questions the global ability of existing graph AE/VAE~models~to~identify~clusters.

\begin{table}[!t]
\centering
\begin{footnotesize}
\begin{tabular}{c|c|c}
\toprule
\textbf{Model}  & \textbf{Performance (in \%)} & \textbf{Total} \\
\tiny (using framework, k=15)&  \tiny \textbf{Normalized MI} & \textbf{run. time} \\ 
\midrule
GAE & $23.76 \pm 2.25$ & $56\text{min}$ \\
VGAE & $24.53 \pm 1.51$ & $\textbf{54\text{min}}$ \\
DeepGAE & $24.27 \pm 1.10$ & $1\text{h}01$ \\
DeepVGAE & $24.54 \pm 1.23$ & $58\text{min}$ \\
Graphite & $24.22 \pm 1.45$  & $59\text{min}$ \\
Var-Graphite & $24.25 \pm 1.51$ & $58\text{min}$ \\
ARGA & $24.26 \pm 1.18$  & $1\text{h}01$ \\
ARVGA & $24.76 \pm 1.32$ & $58\text{min}$ \\
ChebGAE & $25.23 \pm 1.21$ & $1\text{h}41$ \\
ChebVGAE &  $\textbf{25.30} \pm \textbf{1.22}$ & $1\text{h}38$ \\
\midrule
node2vec on $\mathcal{G}$ & $24.10 \pm 1.64$ & $7\text{h}15$ \\
\textit{(best baseline)}  & & \\
\bottomrule
\end{tabular}
\caption{Node clustering on the Patent graph ($n=2.7M$, $m=13.9M$) using our framework on the $15$-core ($|\mathcal{C}_{15}|=35,432$) on all graph AE/VAE variants.}
\end{footnotesize}
\end{table}

\paragraph{Extensions and Openings.} Based on this last finding, our future research will investigate alternative prior distributions for graph VAE models, aiming to detect clusters/communities in graphs. Moreover, while this paper mainly considered featureless nodes, our method easily extends to attributed graphs, since we can add node features from the $k$-core subgraph as input to graph AE/VAE models. To support this claim, we report experiments on graphs \textit{with node features} for both tasks in Annexes~2~and~3, significantly improving scores (e.g., from $85.24\%$ to $88.10\%$ AUC for the 2-core GAE on Cora). However, node features are currently ignored by our propagation process: future works might, therefore, aim to study more efficient feature integration techniques. Lastly, we aim to obtain theoretical guarantees on $k$-core approximations and extend our~approach~to~directed~graphs.

\section{Conclusion}

In this paper, we introduced a framework based on graph degeneracy to easily scale graph (variational) autoencoders. We provided experimental evidence of its ability to process large graphs effectively. Our work confirms the representational power of these models and identifies several directions that, in future research, should lead towards their improvement. 

\bibliographystyle{plain}
\bibliography{ijcai19}

\clearpage

\section*{Supplementary Material}

\maketitle

This supplementary material provides additional details and more complete tables related to the experimental part of the \textit{A Degeneracy Framework for Scalable Graph Autoencoders} paper. More precisely:
\begin{itemize}
    \item Annex 1 describes our five datasets and their respective $k$-core decompositions.
    \item Annex 2 reports complete tables and experimental settings for the \textit{link prediction} task.
    \item Annex 3 reports complete tables and experimental settings for the \textit{node clustering} task.
\end{itemize}

\subsection*{Annex 1 - Datasets}

\subsubsection*{Medium-size graphs}

For comparison purposes, we ran experiments on the three medium-size graphs used in \cite{kipf2016-2}, i.e., the Cora ($n = 2,708$ and $m = 5,429$), Citeseer ($n = 3,327$ and $m = 4,732$), and Pubmed ($n=19,717$ and $m =44,338$) citation networks. In these graphs, nodes are documents and edges are citation links. As \cite{kipf2016-2}, we ignored edges' directions, i.e., we considered undirected versions of these graphs. Documents/nodes have sparse bag-of-words feature vectors, of sizes $3,703$, $1,433$ and $500$, respectively. Each document also has a class label corresponding to its topic : in Cora (resp. Citeseer, Pubmed), nodes are clustered in 6 classes (resp. 7 classes, 3 classes) that we used as ground-truth communities for the \textit{node clustering} task. Classes are roughly balanced. Data were collected from Kipf's repository\footnote{\href{https://github.com/tkipf/gae}{https://github.com/tkipf/gae}} \cite{kipf2016-2}.

\subsubsection*{Large graphs}
We also provided experiments on two publicly available large graphs from Stanford's SNAP website. The first one is the Google web graph\footnote{\href{http://snap.stanford.edu/data/web-Google.html}{http://snap.stanford.edu/data/web-Google.html}} ($n = 875,713$ and $m=4,322,051$), whose nodes are web pages and directed edges represent hyperlinks between these pages. Data do not include ground-truth communities. The second one is the US Patent citation network\footnote{\href{http://snap.stanford.edu/data/cit-Patents.html}{http://snap.stanford.edu/data/cit-Patents.html}} ($n = 2,745,762$ and $m = 13,965,410$), originally released by the National Bureau of Economic Research (NBER) and representing citations between patents. Nodes have classes corresponding to 6 patent categories; we removed nodes without classes from the Patent original graph. For both graphs, we once again ignored edges' directions.

\subsubsection*{$k$-core decompositions}
Tables 4 to 8 detail the $k$-core decomposition of each graph. We used the Python implementation provided in the networkx library. For Citeseer, the $1$-core is smaller than the $0$-core because this graph includes isolated nodes. Figures 2 to 6 illustrate the evolution of the number of nodes in the $k$-core induced by increasing $k$. We note that, for Google (resp. for Patent), it was intractable to train autoencoders on the 0 to 15-cores (resp. on 0 to 13-cores) due to memory errors. Therefore, in our experiments, we trained models on the 16 to 20-cores (resp. 14 to 18-cores).

\begin{table}[H]
\centering
\begin{footnotesize}
\begin{tabular}{c|c|c}  
\toprule
\textbf{$k$} &  \textbf{Number of nodes} &  \textbf{Number of edges} \\
& \textbf{in $k$-core} & \textbf{in $k$-core} \\
\midrule
$0$ & $2,708$ & $5,278$ \\ 
$1$ & $2,708$ & $5,278$ \\ 
$2$ & $2,136$  & $4,768$ \\
$3$ & $1,257$ & $3,198$ \\
$4$ ($\delta^*(\mathcal{G})$) & $174$ & $482$ \\
\bottomrule
\end{tabular}
\caption{$k$-core decomposition of the Cora graph.}
\end{footnotesize}
\end{table}

\begin{figure}[H]
  \centering
  \scalebox{0.45}{\includegraphics{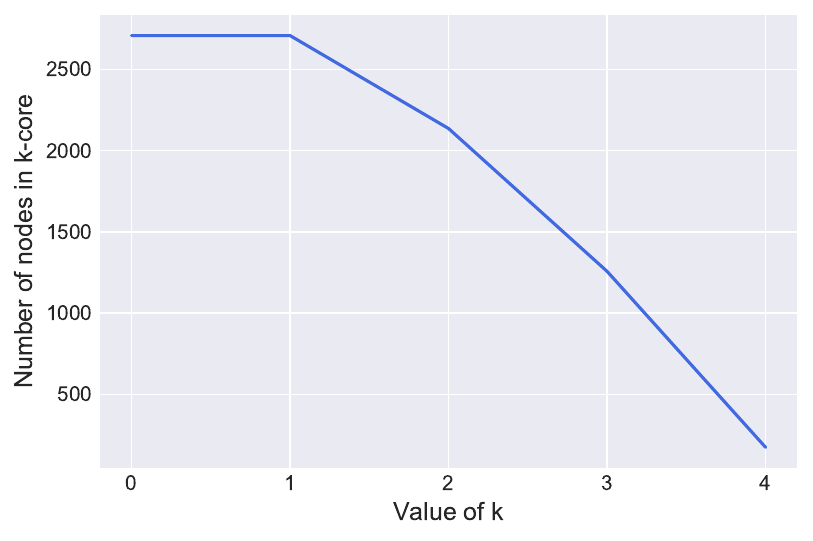}}
  \caption{$k$-core decomposition of the Cora graph.}
\end{figure}

\begin{table}[H]
\centering
\begin{footnotesize}
\begin{tabular}{c|c|c}  
\toprule
\textbf{$k$} &  \textbf{Number of nodes} &  \textbf{Number of edges} \\
& \textbf{in $k$-core} & \textbf{in $k$-core} \\
\midrule
$0$ & $3,327$ & $4,552$ \\
$1$ & $3,279$ & $4,552$ \\
$2$ & $1,601$ & $3,213$ \\
$3$ & $564$ & $1,587$ \\
$4$ & $203$ & $765$ \\
$5$ & $70$ & $319$ \\
$6$ & $28$ & $132$ \\
$7$ ($\delta^*(\mathcal{G})$)& $18$ & $86$ \\
\bottomrule
\end{tabular}
\caption{$k$-core decomposition of the Citeseer graph.}
\end{footnotesize}
\end{table}

\begin{figure}[H]
  \centering
  \scalebox{0.45}{\includegraphics{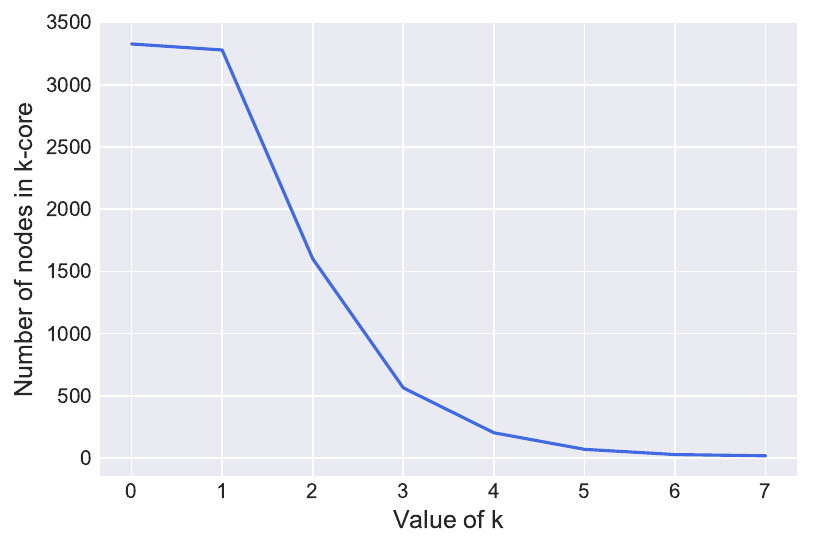}}
  \caption{$k$-core decomposition of the Citeseer graph.}
\end{figure}

\begin{table}[H]
\centering
\begin{footnotesize}
\begin{tabular}{c|c|c}  
\toprule
\textbf{$k$} &  \textbf{Number of nodes} &  \textbf{Number of edges} \\
& \textbf{in $k$-core} & \textbf{in $k$-core} \\
\midrule
$0$ & $19,717$ & $44,324$ \\
$1$ & $19,717$ & $44,324$ \\
$2$ & $10,404$ & $35,011$ \\
$3$ & $6,468$ & $27,439$ \\
$4$ & $4,201$ & $21,040$ \\
$5$ & $2,630$ & $15,309$ \\
$6$ & $1,569$ & $10,486$ \\
$7$ & $937$ & $7,021$ \\
$8$ & $690$ & $5,429$ \\
$9$ & $460$ & $3,686$ \\
$10$ ($\delta^*(\mathcal{G})$) & $137$ & $1,104$ \\
\bottomrule
\end{tabular}
\caption{$k$-core decomposition of the Pubmed graph.}
\end{footnotesize}
\end{table}

\begin{table}[H]
\centering
\begin{footnotesize}
\begin{tabular}{c|c|c}  
\toprule
\textbf{$k$} &  \textbf{Number of nodes} &  \textbf{Number of edges} \\
& \textbf{in $k$-core} & \textbf{in $k$-core} \\
\midrule
$0$ & $875,713 $ & $4,322,051 $ \\
$1$ & $875,713 $ & $4,322,051 $ \\
$2$ & $711,870 $ & $4,160,100$ \\
$3$ & $581,712 $ & $3,915,291$ \\
$4$ & $492,655 $ & $3,668,104$ \\
$5$ & $424,155 $ & $3,416,251$ \\
$6$ & $367,361 $ & $3,158,776$ \\
$7$ & $319,194 $ & $2,902,138$ \\
... & ... & ... \\
$16$ & $53,459 $ & $676,076$ \\
$17$ & $40,488 $ & $519,077$ \\
$18$ & $29,554 $ & $384,478$ \\
$19$ & $19,989 $ & $263,990$ \\
$20$ & $11,073 $ & $154,000$ \\
... & ... & ... \\
$43$ & $103 $ & $2,513$ \\
$44$ ($\delta^*(\mathcal{G})$) & $48 $ & $1,121$ \\
\bottomrule
\end{tabular}
\caption{$k$-core decomposition of the Google graph.}
\end{footnotesize}
\end{table}

\begin{table}[H]
\centering
\begin{footnotesize}
\begin{tabular}{c|c|c}  
\toprule
\textbf{$k$} &  \textbf{Number of nodes} &  \textbf{Number of edges} \\
& \textbf{in $k$-core} & \textbf{in $k$-core} \\
\midrule
$0$ & $2,745,762 $ & $13,965,410$ \\
$1$ & $2,745,762 $ & $13,965,410$ \\
$2$ & $2,539,676 $ & $13,762,533$ \\
$3$ & $2,299,008 $ & $13,295,322$ \\
$4$ & $2,011,518 $ & $12,468,383$ \\
$5$ & $1,671,474 $ & $11,179,712$ \\
$6$ & $1,284,078 $ & $9,363,176$ \\
$7$ & $888,944 $ & $7,164,181$ \\
... & ... & ... \\
$14$ & $46,685 $ & $717,597$ \\
$15$ & $35,432 $ & $576,755$ \\
$16$ & $28,153 $ & $480,436$ \\
$17$ & $22,455 $ & $400,463$ \\
... & ... & ... \\
$63$ & $109 $ & $4,232$ \\
$64$ ($\delta^*(\mathcal{G})$)& $106 $ & $4,043$ \\
\bottomrule
\end{tabular}
\caption{$k$-core decomposition of the Patent graph.}
\end{footnotesize}
\end{table}

\begin{figure}[H]
  \centering
  \scalebox{0.45}{\includegraphics{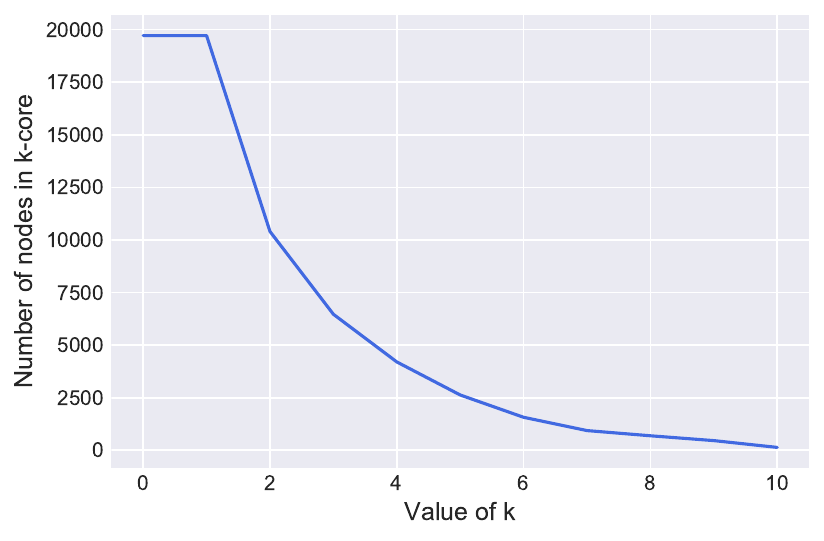}}
  \caption{$k$-core decomposition of the Pubmed graph.}
\end{figure}

\begin{figure}[H]
  \centering
  \scalebox{0.45}{\includegraphics{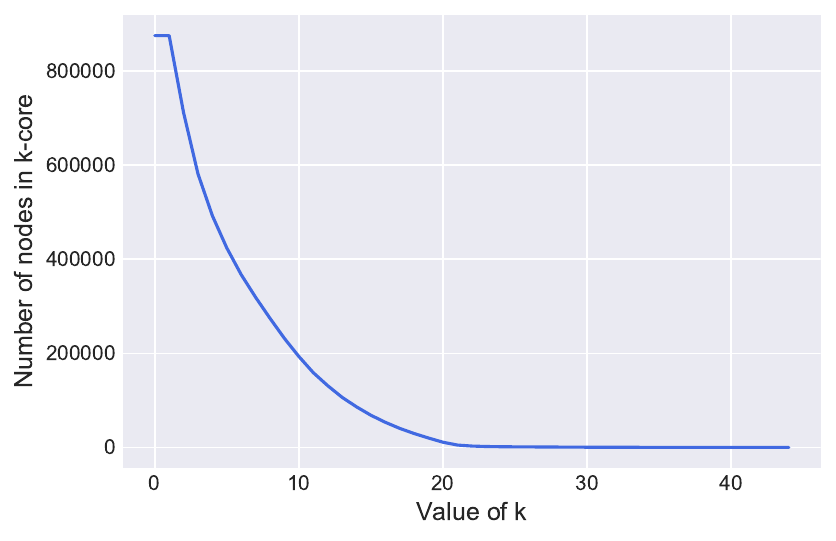}}
  \caption{$k$-core decomposition of the Google graph.}
\end{figure}

\begin{figure}[H]
  \centering
  \scalebox{0.45}{\includegraphics{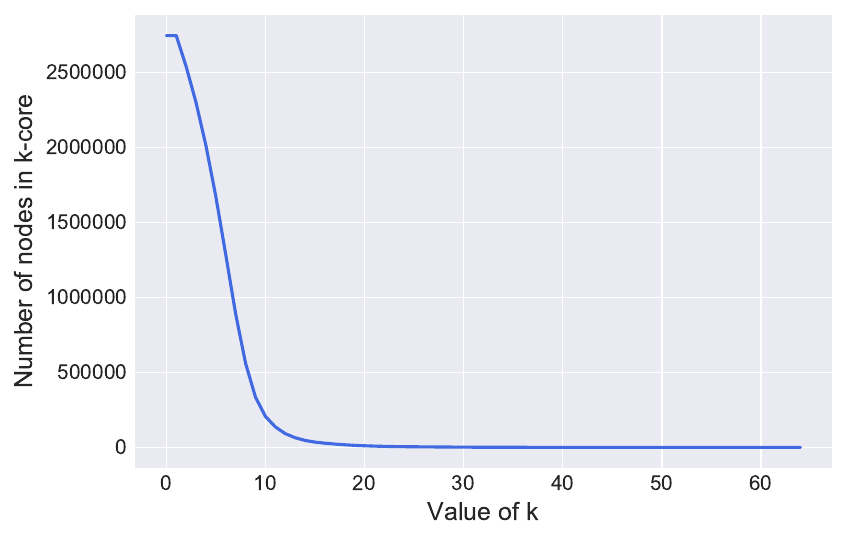}}
  \caption{$k$-core decomposition of the Patent graph.}
\end{figure}

\subsection*{Annex 2 - Link Prediction}

\begin{figure*}[ht!]
  \centering
  \subfigure[Cora: GAE trained on $3$-core]{
  \scalebox{0.4}{\includegraphics{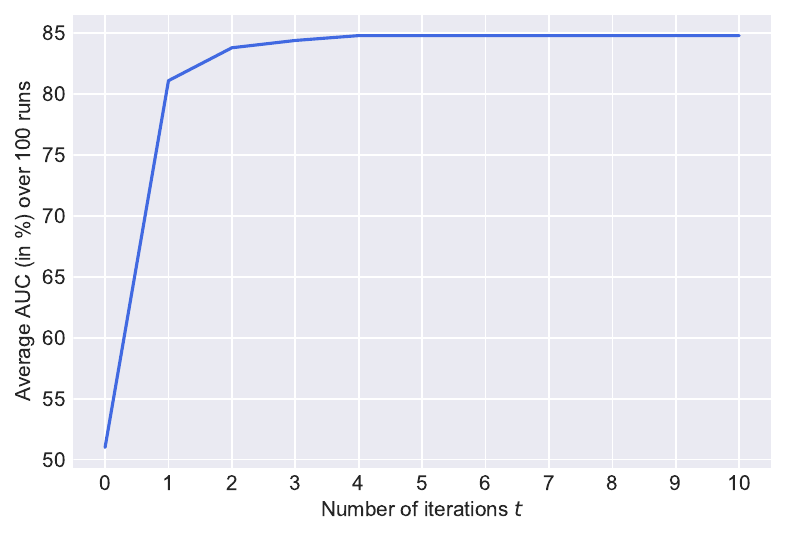}}}
  \subfigure[Pubmed: VGAE trained on $8$-core]{
  \scalebox{0.4}{\includegraphics{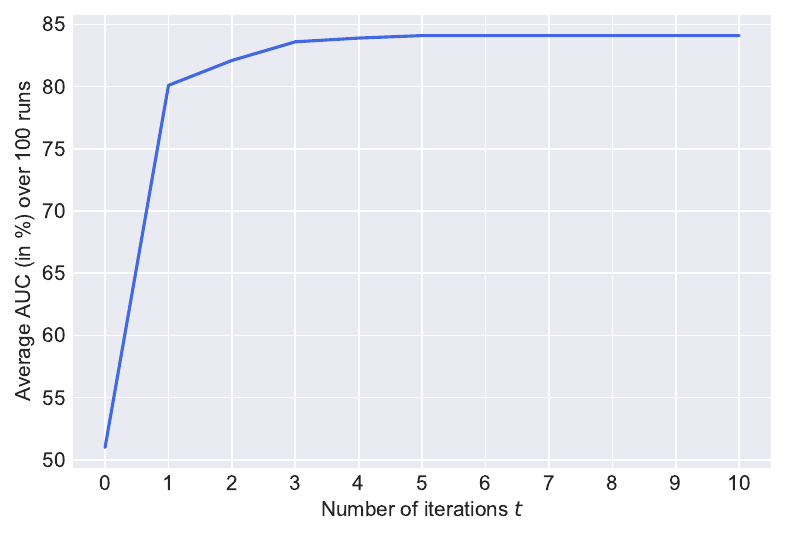}}}
  \subfigure[Google: VGAE trained on $18$-core]{
  \scalebox{0.4}{\includegraphics{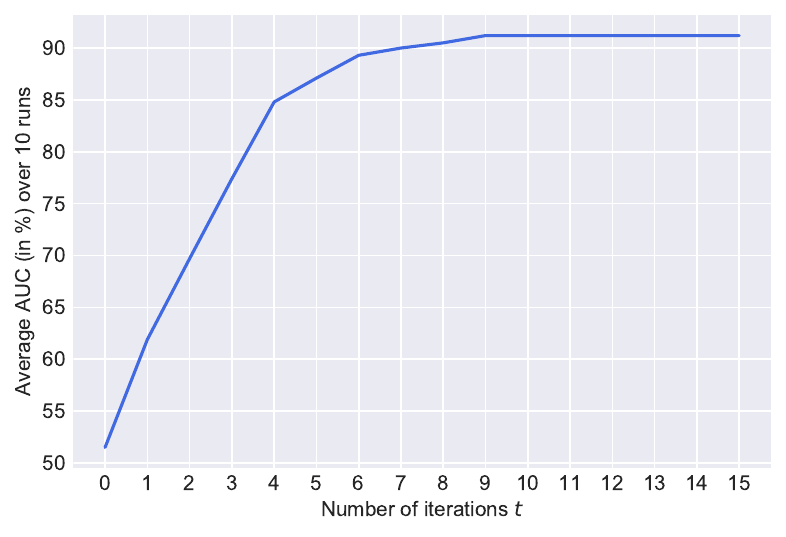}}}
  \caption{Impact of the number of iterations $t$ during progagation on mean AUC scores.}
\end{figure*}

In this Annex 2, we provide complete tables for the \textit{link prediction} task. The main paper discusses the conclusions from experiments; here, we focus on completeness and details regarding our experimental setting.

\subsubsection*{Medium-size graphs}

We applied our framework to all possible subgraphs from the $2$-core to the $\delta^*(\mathcal{G})$-core. We also trained graph AE/VAE models on entire graphs for comparison. Tables 9 to 11 report mean AUC and AP scores and their standard errors over 100 runs (training graphs and masked edges are different for each run) along with mean running times, for the VGAE model. Sizes of $k$-cores vary over runs due to the edge masking process. We obtained comparable performance/speed trade-offs for graph AE/VAE variants: for brevity, we, therefore, only report results on the 2-core~for~these~models.

\subsubsection*{Large graphs}

For the Google and Patent graphs, comparison with models trained on the entire $\mathcal{G}$ was impossible, due to overly large memory requirements. As a consequence, we applied our framework to the five largest $k$-cores (in terms of nodes) that were tractable using our machines. Tables 12 and 13 report mean AUC/AP scores and standard errors over 10 runs (training graphs and masked edges are different for each run) along with mean running times, for the VGAE model. For other models, we only report results on the second-largest cores for brevity. We chose the second-largest cores (17-core for Google, 15-core for Patent) instead of the largest cores (16-core for Google, 14-core for Patent)~to~lower~running~times.
\subsubsection*{Graph AE/VAE training}

All graph AE/VAE models were trained during $200$ epochs to return 16-dimensional embeddings, except for Patent ($500$ epochs, 32-dimensional). We included a 32-dimensional hidden layer in GCN encoders (two for DeepGAE and DeepVGAE), used the Adam optimizer, trained models without dropout, and with a learning rate of $0.01$. We performed full-batch gradient descent and used the reparameterization trick \cite{kingma2013vae}. We used the Tensorflow public implementations of these models (see corresponding references). Overall, our setting is quite similar to \cite{kipf2016-2}, and we indeed managed to reproduce their scores when training GAE and VGAE on entire graphs, with however larger standard errors. This difference comes from the fact that we used 100 different train/test splits, while they launched all runs on fixed dataset splits (randomness, therefore, only came from the initialization).

\subsubsection*{Impact of the number of iterations $t$}

We illustrate the impact of the number of iterations $t$ during propagation on performances in Figure 7. We display the evolution of mean AUC scores w.r.t. $t$, for three different graphs. For $t>5$ (resp. $t>10$) in medium-size graphs (resp. large graphs), scores become stable. We specify that the number of iterations has a negligible impact on running times. Therefore, in our experiments, we set $t = 10$ for all models leveraging~our~degeneracy~framework.

\subsubsection*{Baselines}

For DeepWalk \cite{perozzi2014deepwalk}, we trained models from 10 random walks of length 80 per node with a window size of 5, on a single epoch for each graph. We used similar hyperparameters for node2vec \cite{grover2016node2vec}, setting $p = q =1$, and LINE \cite{tang2015line} enforcing second-order proximity. We directly used the public implementations provided by the authors. Due to unstable and underperforming results with 16-dimensional embeddings, we had to increase dimensions up to 64, to compete with autoencoders. For the spectral embedding baseline, we also computed embeddings from 64 Laplacian eigenvectors. 

In our experiments, we noticed some slight differences w.r.t. \cite{kipf2016-2} regarding baselines, which we explain by our modifications in train/test splits and by different hyperparameters. However, these slight variations do not impact the conclusions of our experiments.
We specify that the spectral embedding baseline is not scalable, due to the required eigendecomposition of the Laplacian matrix. Moreover, we chose not to report results for DeepWalk on large graphs due to too large training times ($>20\text{h}$) on our machines. This does not question the scalability of this method (which is quite close to node2vec), but it suggests possible improvements in the existing implementation.

\begin{table*}[h!]
\centering
\begin{footnotesize}
\begin{tabular}{c|c|cc|cccc}
\toprule
\textbf{Model}  & \textbf{Size of input} & \multicolumn{2}{c}{\textbf{Mean Perf. on Test Set (in \%)}} & \multicolumn{4}{c}{\textbf{Mean Running Times (in sec.)}}\\
& \textbf{$k$-core} & \tiny \textbf{AUC} & \tiny \textbf{AP} & \tiny $k$-core dec. & \tiny Model train & \tiny Propagation & \tiny \textbf{Total} \\ 
\midrule
VGAE on $\mathcal{G}$ & - & $84.07 \pm 1.22$ & $\textbf{87.83} \pm \textbf{0.95}$ & - & $15.34$ & - & $15.34$ \\
on 2-core & $1,890 \pm 16$  & $\textbf{85.24} \pm \textbf{1.12}$ & $ 87.37 \pm 1.13$ & $0.16$ & $8.00$ & $0.10$ & $8.26$ \\
on 3-core & $862 \pm 26$ & $84.53 \pm 1.33$ & $85.04 \pm 1.87$ & $0.16$ & $2.82$ & $0.11$ & $3.09$  \\
on 4-core & $45 \pm 13$ & $72.33 \pm 4.67$ & $71.98 \pm 4.97$ & $0.16$ & $\textbf{0.98}$ & $0.12$ & $\textbf{1.26}$ \\
\midrule
GAE on 2-core & $1,890 \pm 16$ &$85.17 \pm 1.02$ & $87.26 \pm 1.12$ & $0.16$ & $8.05$ & $0.10$ & $8.31$ \\
DeepGAE on 2-core & $1,890 \pm 16$ & $86.25 \pm 0.81$ & $87.92 \pm 0.78$ & $0.16$ & $8.24$ & $0.10$ & $8.50$ \\
DeepVGAE on 2-core & $1,890 \pm 16$ & $86.16 \pm 0.95$ & $87.71 \pm 0.98$ & $0.16$ & $8.20$ & $0.10$ & $8.46$ \\
Graphite on 2-core & $1,890 \pm 16$& $86.35 \pm 0.82$ & $88.18 \pm 0.84$ & $0.16$ & $9.41$ & $0.10$ & $9.67$ \\
Var-Graphite on 2-core & $1,890 \pm 16$ & $\textbf{86.39} \pm \textbf{0.84}$ & $88.05 \pm 0.80$ & $0.16$ & $9.35$ & $0.10$ & $9.61$ \\
ARGA on 2-core & $1,890 \pm 16$ & $85.82 \pm 0.88$ & $ 88.22\pm 0.70$ & $0.16$ & $7.99$ & $0.10$ & $8.25$ \\
ARVGA on 2-core & $1,890 \pm 16$ &$85.74 \pm 0.74$ & $88.14 \pm 0.74$ & $0.16$ & $7.98$ & $0.10$ & $8.24$ \\
ChebGAE on 2-core & $1,890 \pm 16$ & $86.15 \pm 0.54$ & $88.01 \pm 0.39$ & $0.16$ & $15.78$ & $0.10$ & $16.04$ \\
ChebVGAE on 2-core & $1,890 \pm 16$ &$86.30 \pm 0.49$ & $\textbf{88.29} \pm \textbf{0.50}$ & $0.16$ & $15.65$ & $0.10$ & $15.91$ \\
\midrule
GAE with node features on 2-core & $1,890 \pm 16$ &$\textbf{88.10} \pm \textbf{0.87}$ & $89.36 \pm 0.88$ & $0.16$ & $8.66$ & $0.10$ & $8.92$ \\
VGAE with node features on 2-core & $1,890 \pm 16$ &$87.97 \pm 0.99$ & $\textbf{89.53} \pm \textbf{0.96}$ & $0.16$ & $8.60$& $0.10$ & $8.86$ \\
\midrule
DeepWalk & - &$83.02 \pm 1.21$ & $84.41 \pm 1.23$ & - & $38.50$ & - & $38.50$ \\
LINE & - & $83.49 \pm 1.31$ & $84.42 \pm 1.39$ & - & $11.55$ & - & $11.55$ \\
node2vec & - & $83.52 \pm 1.47$ & $84.60 \pm 1.23$ & - & $8.42$ & - & $8.42$ \\
Spectral & - & $\textbf{86.53} \pm \textbf{1.02}$ & $\textbf{87.41} \pm \textbf{1.12}$ & - & $\textbf{2.78}$ & - & $\textbf{2.78}$ \\
\bottomrule
\end{tabular}
\caption{Link prediction on Cora ($n = 2,708$, $m = 5,429$), using VGAE on all cores, graph AE/VAE variants on 2-core, and baselines.}
\end{footnotesize}
\end{table*}


\begin{table*}
\centering
\begin{footnotesize}
\begin{tabular}{c|c|cc|cccc}
\toprule
\textbf{Model}  & \textbf{Size of input} & \multicolumn{2}{c}{\textbf{Mean Perf. on Test Set (in \%)}} & \multicolumn{4}{c}{\textbf{Mean Running Times (in sec.)}}\\
& \textbf{$k$-core} & \tiny \textbf{AUC} & \tiny \textbf{AP} & \tiny $k$-core dec. & \tiny Model train & \tiny Propagation & \tiny \textbf{Total} \\ 
\midrule
VGAE on $\mathcal{G}$ & - & $\textbf{78.10} \pm \textbf{1.52}$ & $\textbf{83.12} \pm \textbf{1.03}$ & - & $22.40$ & - & $22.40$ \\
on 2-core & $1,306 \pm 19$  & $77.50 \pm 1.59$ & $ 81.92 \pm 1.41$ & $0.15$ & $4.72$ & $0.11$ & $4.98$ \\
on 3-core& $340 \pm 13$ & $76.40 \pm 1.72$ & $80.22 \pm 1.42$ & $0.15$ & $1.75$ & $0.14$ & $2.04$  \\
on 4-core & $139 \pm 13$ & $73.34 \pm 2.43$ & $75.49 \pm 2.39$ & $0.15$ & $1.16$ & $0.16$ & $1.47$ \\
on 5-core & $46 \pm 10$ & $65.47 \pm 3.16$ & $68.50 \pm 2.77$ & $0.15$ & $\textbf{0.99}$ & $0.16$ & $\textbf{1.30}$ \\
\midrule
GAE on 2-core & $1,306 \pm 19$ &$78.35 \pm 1.51$ & $82.44 \pm 1.32$ & $0.15$ & $4.78$ & $0.11$ & $5.04$ \\
DeepGAE on 2-core & $1,306 \pm 19$ & $\textbf{79.32} \pm \textbf{1.39}$ & $82.80 \pm 1.33$ & $0.15$ & $4.99$ &  $0.11$ & $5.25$ \\
DeepVGAE on 2-core & $1,306 \pm 19$ & $78.52 \pm 1.02$ & $82.43 \pm 0.97$ & $0.15$ & $4.95$ &  $0.11$ & $5.21$ \\
Graphite on 2-core & $1,306 \pm 19$ & $78.61 \pm 1.58$ & $82.81 \pm 1.24$ & $0.15$ & $5.88$ & $0.11$ & $6.14$ \\
Var-Graphite on 2-core & $1,306 \pm 19$ & $78.51 \pm 1.62$ & $82.72 \pm 1.25$ & $0.15$ & $5.86$ &  $0.11$ & $6.12$ \\
ARGA on 2-core & $1,306 \pm 19$ & $78.89 \pm 1.33$ & $82.89 \pm 1.03$ & $0.15$ & $4.54$ &  $0.11$ & $4.80$ \\
ARVGA on 2-core & $1,306 \pm 19$ &$77.98 \pm 1.39$ & $82.39 \pm 1.09$ & $0.15$ & $4.40$ &  $0.11$ & $4.66$ \\
ChebGAE on 2-core & $1,306 \pm 19$ & $78.62 \pm 0.95$ & $\textbf{83.22} \pm \textbf{0.89}$ & $0.15$ & $8.87$ &  $0.11$ & $9.13$ \\
ChebVGAE on 2-core & $1,306 \pm 19$ &$78.75 \pm 1.03$ & $\textbf{83.23} \pm \textbf{0.76}$ & $0.15$ & $8.75$ &  $0.11$ & $9.01$ \\
\midrule
GAE with node features on 2-core & $1,306 \pm 19$ & $81.21 \pm 1.86$ & $\textbf{83.99} \pm \textbf{1.52}$ & $0.15$ & $5.51$ & $0.11$ & $5.77$ \\
VGAE with node features on 2-core & $1,306 \pm 19$ & $\textbf{81.88} \pm \textbf{2.23}$ & $83.83 \pm 1.85$ & $0.15$ & $5.70$& $0.11$ & $5.96$ \\
\midrule
DeepWalk & - &$76.92 \pm 1.15$ & $79.30 \pm 0.79$ & - & $41.20$ & - & $41.20$ \\
LINE & - & $77.12 \pm 1.09$ & $79.92 \pm 0.80$ & - & $12.41$ & - & $12.41$ \\
node2vec & - & $76.98 \pm 1.27$ & $79.74 \pm 0.84$ & - & $9.98$ & - & $9.98$ \\
Spectral & - & $\textbf{80.56} \pm \textbf{1.41}$ & $\textbf{83.98} \pm \textbf{1.08}$ & - & $\textbf{3.77}$ & - & $\textbf{3.77}$ \\
\bottomrule
\end{tabular}
\caption{Link prediction on Citeseer ($n = 3,327$, $m = 4,732$), using VGAE on all cores*, graph AE/VAE variants on 2-core, and baselines. * 6-core and 7-core are not reported due to their frequent vanishing after edge masking.}
\end{footnotesize}
\end{table*}


\begin{table*}
\centering
\begin{footnotesize}
\begin{tabular}{c|c|cc|cccc}
\toprule
\textbf{Model}  & \textbf{Size of input} & \multicolumn{2}{c}{\textbf{Mean Perf. on Test Set (in \%)}} & \multicolumn{4}{c}{\textbf{Mean Running Times (in sec.)}}\\
& \textbf{$k$-core} & \tiny \textbf{AUC} & \tiny \textbf{AP} & \tiny $k$-core dec. & \tiny Model train & \tiny Propagation & \tiny \textbf{Total} \\ 
\midrule
VGAE on $\mathcal{G}$ & - & $83.02 \pm 0.13$ & $\textbf{87.55} \pm \textbf{0.18}$ & - & $710.54$ & - & $710.54$  \\
on 2-core & $9,277 \pm 25$  & $\textbf{83.97} \pm \textbf{0.39}$ & $85.80 \pm 0.49$ & $1.35$ & $159.15$ & $0.31$ & $160.81$ \\
on 3-core & $5,551 \pm 19$ & $\textbf{83.92} \pm \textbf{0.44}$ & $85.49 \pm 0.71$ & $1.35$ & $60.12$ & $0.34$ & $61.81$ \\
on 4-core & $3,269 \pm 30$ & $82.40 \pm 0.66$ & $83.39 \pm 0.75$ & $1.35$ & $22.14$ & $0.36$ & $23.85$ \\
on 5-core & $1,843 \pm 25$ & $78.31 \pm 1.48$ & $79.21 \pm 1.64$ & $1.35$ & $7.71$ & $0.36$ & $9.42$ \\
... & ... & ... & ... & ... & ... & ... & ... \\
on 8-core & $414 \pm 89$ & $67.27 \pm 1.65$ & $67.65 \pm 2.00$ & $1.35$ & $1.55$ & $0.38$ & $3.28$  \\
on 9-core & $149 \pm 93$ & $61.92 \pm 2.88$ & $63.97 \pm 2.86$ & $1.35$ & $\textbf{1.14}$ & $0.38$ & $\textbf{2.87}$ \\
\midrule
GAE on 2-core & $9,277 \pm 25$ &$84.30 \pm 0.27$ & $\textbf{86.11} \pm \textbf{0.43}$ & $1.35$ & $167.25$ &$0.31$ & $168.91$ \\
DeepGAE on 2-core & $9,277 \pm 25$ & $84.61 \pm 0.54$ & $85.18 \pm 0.57$ & $1.35$ & $166.38$ & $0.31$ & $168.04$ \\
DeepVGAE on 2-core & $9,277 \pm 25$ & $84.46 \pm 0.46$ & $85.31 \pm 0.45$ & $1.35$ & $157.43$ & $0.31$ & $159.09$ \\
Graphite on 2-core & $9,277 \pm 25$ & $84.51 \pm 0.58$ & $85.65 \pm 0.58$ & $1.35$ & $167.88$ & $0.31$ & $169.54$ \\
Var-Graphite on 2-core & $9,277 \pm 25$ & $84.30 \pm 0.57$ & $85.57 \pm 0.58$ & $1.35$ & $158.16$ & $0.31$ & $159.82$ \\
ARGA on 2-core & $9,277 \pm 25$ & $84.37 \pm 0.54$ & $86.07 \pm 0.45$ & $1.35$ & $164.06$ & $0.31$ & $165.72$ \\
ARVGA on 2-core & $9,277 \pm 25$ &$84.10 \pm 0.53$ & $85.88 \pm 0.41$ & $1.35$ & $155.83$ & $0.31$ & $157.49$ \\
ChebGAE on 2-core & $9,277 \pm 25$ & $\textbf{84.63} \pm \textbf{0.42}$ & $86.05 \pm 0.70$ & $1.35$ & $330.37$ & $0.31$ & $332.03$ \\
ChebVGAE on 2-core & $9,277 \pm 25$ &$84.54 \pm 0.48$ & $86.00 \pm 0.63$ & $1.35$ & $320.01$ & $0.31$ & $321.67$ \\
\midrule
GAE with node features on 2-core & $9,277 \pm 25$ &$84.94 \pm 0.54$ & $85.83 \pm 0.58$ & $1.35$ & $168.62$ & $0.31$ & $170.28$ \\
VGAE with node features on 2-core & $9,277 \pm 25$ &$\textbf{85.81} \pm \textbf{0.68}$ & $\textbf{88.01} \pm \textbf{0.53}$ & $1.35$ & $164.10$& $0.31$ & $165.76$ \\
\midrule
DeepWalk & - &$81.04 \pm 0.45$ & $84.04 \pm 0.51$ & - & $342.25$ & - & $342.25$ \\
LINE & - & $81.21 \pm 0.31$ & $84.60 \pm 0.37$ & - & $63.52$ & - & $63.52$ \\
node2vec & - & $81.25 \pm 0.26$ & $85.55 \pm 0.26$ & - & $48.91$ & - & $48.91$ \\
Spectral & - & $\textbf{83.14} \pm \textbf{0.42}$ & $\textbf{86.55} \pm \textbf{0.41}$ & - & $\textbf{31.71}$ & - & $\textbf{31.71}$ \\
\bottomrule
\end{tabular}
\caption{Link prediction on Pubmed ($n=19,717$, $m =44,338$), using VGAE on all cores*, graph AE/VAE variants on 2-core, and baselines. * 10-core is not reported due to its frequent vanishing after edge masking.}
\end{footnotesize}
\end{table*}


\begin{table*}
\centering
\begin{footnotesize}
\begin{tabular}{c|c|cc|cccc}
\toprule
\textbf{Model}  & \textbf{Size of input} & \multicolumn{2}{c}{\textbf{Mean Perf. on Test Set (in \%)}} & \multicolumn{4}{c}{\textbf{Mean Running Times (in sec.)}}\\
& \textbf{$k$-core} & \tiny \textbf{AUC} & \tiny \textbf{AP} & \tiny $k$-core dec. & \tiny Model train & \tiny Propagation & \tiny \textbf{Total} \\ 
\midrule
VGAE on 16-core & $36,854 \pm 132$  & $\textbf{93.56} \pm \textbf{0.38}$ & $\textbf{93.34} \pm \textbf{0.31}$ & $301.16$ & $2,695.42$ & $25.54$ & $3,022.12~(50\text{min})$ \\
on 17-core  & $23,787 \pm 208$ & $\textbf{93.22} \pm \textbf{0.40}$ & $\textbf{93.20} \pm \textbf{0.45}$ & $301.16$ & $1,001.64$ & $28.16$ & $1,330.86~(22 \text{min})$  \\
on 18-core  & $13,579 \pm 75$ & $91.24 \pm 0.40$ & $92.34 \pm 0.51$ & $301.16$ & $326.76$ & $28.20$ & $656.12~(11\text{min})$ \\
on 19-core & $6,613 \pm 127$ & $87.79 \pm 0.31$ & $89.13 \pm 0.29$ & $301.16$ & $82.19$ & $28.59$ & $411.94~(7 \text{min})$ \\
on 20-core & $3,589 \pm 106$ & $81.74 \pm 1.17$ & $83.51 \pm 1.22$ & $301.16$ & $\textbf{25.59}$ & $28.50$ & $\textbf{355.55~(6 \text{min})}$ \\
\midrule
GAE on $17$-core & $23,787 \pm 208$ &$94.02 \pm 0.20$ & $ 94.31\pm 0.21$ &  $301.16$ & $1,073.18$ & $28.16$ & $1,402.50~(23 \text{min})$ \\
DeepGAE on $17$-core & $23,787 \pm 208$& $93.74 \pm 0.17$ & $92.94 \pm 0.33$ &  $301.16$ & $1,137.24$ & $28.16$ & $1,466.56~(24 \text{min})$ \\
DeepVGAE on $17$-core & $23,787 \pm 208$ & $93.12 \pm 0.29$ & $92.71 \pm 0.29$ &  $301.16$ & $1,088.41$ & $28.16$ & $1,417.73~(24 \text{min})$ \\
Graphite on $17$-core & $23,787 \pm 208$ & $93.29 \pm 0.33$ & $93.11\pm 0.42$ &  $301.16$ & $1,033.21$ & $28.16$ & $1,362.53~(23 \text{min})$ \\
Var-Graphite on $17$-core & $23,787 \pm 208$ & $93.13 \pm 0.35$ & $92.90 \pm 0.39$ &  $301.16$ & $989.90$ & $28.16$ & $1,319.22~(22 \text{min})$ \\
ARGA on $17$-core & $23,787 \pm 208$ & $93.82 \pm 0.17$ & $94.17 \pm 0.18$ &  $301.16$ & $1,053.95$ & $28.16$ & $1,383.27~(23 \text{min})$ \\
ARVGA on $17$-core & $23,787 \pm 208$ &$93.00 \pm 0.17$ & $93.38 \pm 0.19$ &  $301.16$ & $1,027.52$ & $28.16$ & $1,356.84~(23 \text{min})$ \\
ChebGAE on $17$-core & $23,787 \pm 208$ & $\textbf{95.24} \pm \textbf{0.26}$ & $\textbf{96.94} \pm \textbf{0.27}$ &  $301.16$ & $2,120.66$ & $28.16$ & $2,449.98~(41 \text{min})$ \\
ChebVGAE on $17$-core & $23,787 \pm 208$ &$95.03 \pm 0.25$ & $96.82 \pm 0.72$ &  $301.16$ & $2,086.07$ & $28.16$ & $2,415.39~(40 \text{min})$ \\
\midrule
LINE & - & $93.52 \pm 0.43$ & $95.90 \pm 0.59$ & - & $19,699.02$ & - & $19,699.02~(5\text{h}19)$ \\
node2vec & - & $\textbf{94.89} \pm \textbf{0.63}$ & $\textbf{96.82} \pm \textbf{0.72}$ & - & $14,762.78$ & - & $14,762.78~(4\text{h}06)$ \\
\bottomrule
\end{tabular}
\caption{Link prediction on Google ($n = 875,713$, $m=4,322,051$), using VGAE on 16 to 20 cores, graph AE/VAE variants on $17$-core, and baselines.}
\end{footnotesize}
\end{table*}

\begin{table*}
\centering
\begin{footnotesize}
\begin{tabular}{c|c|cc|cccc}
\toprule
\textbf{Model}  & \textbf{Size of input} & \multicolumn{2}{c}{\textbf{Mean Perf. on Test Set (in \%)}} & \multicolumn{4}{c}{\textbf{Mean Running Times (in sec.)}}\\
& \textbf{$k$-core} & \tiny \textbf{AUC} & \tiny \textbf{AP} & \tiny $k$-core dec. & \tiny Model train & \tiny Propagation & \tiny \textbf{Total} \\ 
\midrule
VGAE on $14$-core & $38,408 \pm 147$  & $\textbf{88.48} \pm \textbf{0.35}$ & $\textbf{88.81} \pm \textbf{0.32}$ & $507.08$ & $3,024.31$ & $122.29$ & $3,653.68~(1\text{h}01)$ \\
on $15$-core & $29,191 \pm 243$ & $88.16 \pm 0.50$ & $88.37 \pm 0.57$ & $507.08$  & $1,656.46$ & $123.47$ &$2,287.01~(38\text{min})$ \\
on $16$-core  & $23,132 \pm 48$ & $87.85 \pm 0.47$ & $88.02 \pm 0.48$ & $507.08$ & $948.09$ & $124.26$  &$1,579.43~(26\text{min})$ \\
on $17$-core  & $18,066 \pm 143$ & $87.34 \pm 0.56$ & $87.64 \pm 0.47$ & $507.08$ & $574.25$ & $126.55$  & $1,207.88~(20\text{min})$ \\
on $18$-core  & $13,972 \pm 86$ & $87.27 \pm 0.55$ & $87.78 \pm 0.51$ & $507.08$ & $\textbf{351.73}$ & $127.01$ &$\textbf{985.82~(16\text{min})}$ \\
\midrule
GAE on $15$-core &  $29,191 \pm 243$ & $87.59 \pm 0.29$ & $87.30 \pm 0.28$ &  $507.08$  & $1,880.11$ &  $123.47$ & $2,510.66~(42 \text{min})$ \\
DeepGAE on $15$-core & $29,191 \pm 243$ & $87.71 \pm 0.31$ & $87.64 \pm 0.19$ &  $507.08$  & $2,032.15$ &  $123.47$ & $2,662.70~(44 \text{min})$ \\
DeepVGAE on $15$-core &  $29,191 \pm 243$ & $87.03 \pm 0.54$ & $87.20 \pm 0.44$ &  $507.08$  & $1,927.33$ &  $123.47$ & $2,557.88~(43 \text{min})$ \\
Graphite on $15$-core &  $29,191 \pm 243$ & $85.19 \pm 0.38$ & $86.01 \pm 0.31$ &  $507.08$  & $1,989.72$ &  $123.47$ & $2,620.27~(44 \text{min})$ \\
Var-Graphite on $15$-core &  $29,191 \pm 243$ & $85.37 \pm 0.30$ & $86.07 \pm 0.24$ &  $507.08$  & $1,916.79$ &  $123.47$ & $2,547.34~(42 \text{min})$ \\
ARGA on $15$-core &  $29,191 \pm 243$ & $\textbf{89.22} \pm \textbf{0.10}$ & $\textbf{89.40} \pm \textbf{0.11}$ &  $507.08$  & $2,028.46$ &  $123.47$ & $2,659.01~(44 \text{min})$ \\
ARVGA on $15$-core &  $29,191 \pm 243$ &$87.18 \pm 0.17$ & $87.39 \pm 0.33$ &  $507.08$  & $1,915.53$ &  $123.47$ & $2,546.08~(42 \text{min})$ \\
ChebGAE on $15$-core &  $29,191 \pm 243$  & $88.53 \pm 0.20$ & $88.91 \pm 0.20$ &  $507.08$  & $3,391.01$ & $123.47$ & $4,021.56~(1\text{h}07)$ \\
ChebVGAE on $15$-core & $29,191 \pm 243$ &$88.75 \pm 0.19$ & $89.07 \pm 0.24$ &  $507.08$  & $3,230.52$ & $123.47$ & $3,861.07~(1\text{h}04)$ \\
\midrule
LINE & - & $90.07 \pm 0.41$ & $94.52 \pm 0.49$ & - & $33,063.80$ & - & $33,063.80~(9\text{h}11)$ \\
node2vec & - & $\textbf{95.04} \pm \textbf{0.25}$ & $\textbf{96.01} \pm \textbf{0.19}$ & - & $26,126.01$ & - & $26,126.01~(7\text{h}15)$ \\
\bottomrule
\end{tabular}
\caption{Link prediction on Patent ($n = 2,745,762$, $m = 13,965,410$), using VGAE on 14 to 18 cores, graph AE/VAE variants on $15$-core, and baselines.}
\end{footnotesize}
\end{table*}

\subsection*{Annex 3 - Node Clustering}

Lastly, we provide complete tables for the \textit{node clustering} task. As before, we focus on the completeness of results, and refer to the main paper for interpretations.

In experiments, we evaluated the quality of node clustering from latent representations, running $k$-means in embeddings and reporting normalized \textit{Mutual Information} (MI) scores. We used scikit-learn's implementation with $k$-means++ initialization. We do not report any result for the Google graph, due to the lack of ground-truth communities. Also, we obtained very low scores on the Citeseer graph, with all methods, which suggests that node features are more valuable than the graph structure to explain labels. As a consequence, we also omit this graph and focus on Cora, Pubmed, and Patent in Tables~14~to~16. We constructed tables in a similar fashion w.r.t. Annex~2. 

Models were trained with identical hyperparameters w.r.t. the \textit{link prediction} task. Contrary to Annex 2, we do not report spectral clustering results because graphs are not connected. Nevertheless, we compared to the ``Louvain'' method, a popular scalable algorithm to cluster nodes by maximizing the  modularity \cite{blondel2008louvain}, using the Python implementation provided in the python-louvain's library.

\begin{table*}
\centering
\begin{footnotesize}
\begin{tabular}{c|c|c|cccc}
\toprule
\textbf{Model}  & \textbf{Size of input} & \textbf{Mean Performance (in \%)} & \multicolumn{4}{c}{\textbf{Mean Running Times (in sec.)}}\\
& \textbf{$k$-core} & \tiny \textbf{MI} & \tiny $k$-core dec. & \tiny Model train & \tiny Propagation & \tiny \textbf{Total} \\ 
\midrule
VGAE on $\mathcal{G}$ & - & $29.52 \pm 2.61$  & - & $15.34$ & - & $15.34$ \\
on 2-core & $2,136$  & $ 34.08 \pm 2.55$ & $0.16$ & $9.94$ & $0.10$ & $10.20$ \\
on 3-core& $1,257$ & $\textbf{36.29} \pm \textbf{2.52}$ & $0.16$ & $4.43$ & $0.11$ & $4.70$  \\
on 4-core & $174$ & $35.93 \pm 1.88$ & $0.16$ & $\textbf{1.16}$ & $0.12$ & $\textbf{1.44}$ \\

\midrule
VGAE with node features on $\mathcal{G}$ & - & $\textbf{47.25} \pm \textbf{1.80}$ &  - & $15.89$ & - & $15.89$ \\
on 2-core & $2,136$  & $45.09 \pm 1.91$ & $0.16$ & $10.42$ & $0.10$ & $10.68$ \\
on 3-core & $1,257$ & $40.96 \pm 2.06$ & $0.16$ & $4.75$ & $0.11$ & $5.02$  \\
on 4-core & $174$ & $38.11 \pm 1.23$ & $0.16$ & $\textbf{1.22}$ & $0.12$ & $\textbf{1.50}$ \\
\midrule
GAE on 2-core & $2,136$ &$34.91 \pm 2.51$ &  $0.16$ & $10.02$ & $0.10$ & $10.28$ \\
DeepGAE on 2-core & $2,136$ & $35.30 \pm 2.52$ & $0.16$ & $10.12$ & $0.10$ & $10.38$ \\
DeepVGAE on 2-core & $2,136$ & $34.49 \pm 2.85$ & $0.16$ & $10.09$ & $0.10$ & $10.35$ \\
Graphite on 2-core & $2,136$ & $33.91 \pm 2.17$ & $0.16$ & $10.97$ & $0.10$ & $11.23$ \\
Var-Graphite on 2-core & $2,136$ & $33.89 \pm 2.13$  & $0.16$ & $10.91$ & $0.10$ & $11.17$ \\
ARGA on 2-core & $2,136$ & $34.73 \pm 2.84$  & $0.16$ & $9.99$ & $0.10$ & $10.25$ \\
ARVGA on 2-core & $2,136$ &$33.36 \pm 2.53$  & $0.16$ & $9.97$ & $0.10$ & $10.23$ \\
ChebGAE on 2-core & $2,136$ & $\textbf{36.52} \pm \textbf{2.05}$ & $0.16$ & $19.22$ & $0.10$ & $19.48$ \\
ChebVGAE on 2-core & $2,136$ &$37.83 \pm 2.11$  & $0.16$ & $20.13$ & $0.10$ & $20.39$ \\
\midrule
DeepWalk & - &$40.37 \pm 1.51$  & - & $38.50$ & - & $38.50$ \\
LINE & - & $39.78 \pm 1.24$  & - & $11.55$ & - & $11.55$ \\
Louvain & - & $\textbf{46.76} \pm \textbf{0.82}$ & - & $\textbf{1.83}$ & - & $\textbf{1.83}$ \\
node2vec & - & $43.45 \pm 1.32$ & - & $8.42$ & - & $8.42$ \\
\bottomrule
\end{tabular}
\caption{Node clustering on Cora ($n = 2,708$, $m = 5,429$), using VGAE on all cores, graph AE/VAE variants on 2-core, and baselines.}
\end{footnotesize}
\end{table*}

\begin{table*}
\centering
\begin{footnotesize}
\begin{tabular}{c|c|c|cccc}
\toprule
\textbf{Model}  & \textbf{Size of input} & \textbf{Mean Performance (in \%)} & \multicolumn{4}{c}{\textbf{Mean Running Times (in sec.)}}\\
& \textbf{$k$-core} & \tiny \textbf{MI} & \tiny $k$-core dec. & \tiny Model train & \tiny Propagation & \tiny \textbf{Total} \\ 
\midrule
VGAE on $\mathcal{G}$ & - & $22.36 \pm 0.25$  & - & $707.77$ & - & $707.77$ \\
on 2-core& $10,404$  & $ 23.71 \pm 1.83$ & $1.35$ & $199.07$ & $0.30$ & $200.72$ \\
on 3-core & $6,468$ & $\textbf{25.19} \pm \textbf{1.59}$ & $1.35$ & $79.26$ & $0.34$ & $80.95$  \\
on 4-core & $4,201$ & $24.67 \pm 3.87$ &  $1.35$ & $34.66$ & $0.35$ & $36.36$ \\
on 5-core & $2,630$ & $17.90 \pm 3.76$ &  $1.35$ & $14.55$ & $0.36$ & $16.26$ \\
... & ... & ... & ... & ... & ... & ... \\
on 10-core & $137$ & $10.79 \pm 1.16$ &  $1.35$ & $\textbf{1.15}$ & $0.38$ & $\textbf{2.88}$ \\
\midrule
VGAE with node features on $\mathcal{G}$ & - & $\textbf{26.05} \pm \textbf{1.40}$ &  - & $708.59$ & - & $708.59$ \\
on 2-core & $10,404$  & $24.25 \pm 1.92$ & $1.35$ & $202.37$ & $0.30$ & $204.02$ \\
on 3-core & $6,468$ & $23.26 \pm 3.42$ & $1.35$ & $82.89$ & $0.34$ & $84.58$  \\
on 4-core & $4,201$ & $20.17 \pm 1.73$ & $1.35$ & $36.89$ & $0.35$ & $38.59$ \\
on 5-core & $2,630$ & $18.15 \pm 2.04$ & $1.35$ & $16.08$ & $0.36$ & $17.79$ \\
... & ... & ... & ... & ... & ... & ... \\
on 10-core & $137$ & $11.67 \pm 0.71$ & $1.35$ & $\textbf{0.97}$ & $0.38$ & $\textbf{2.70}$ \\
\midrule
GAE on 2-core & $10,404$ &$22.76 \pm 2.25$ &  $1.35$ & $203.56$ & $0.30$ & $205.21$ \\
DeepGAE on 2-core & $10,404$ & $24.53 \pm 3.30$ & $1.35$ & $205.11$ & $0.30$ & $206.76$ \\
DeepVGAE on 2-core & $10,404$ & $25.63 \pm 3.51$ & $1.35$ & $200.73$ & $0.30$ & $202.38$ \\
Graphite on 2-core & $10,404$ & $26.55 \pm 2.17$  & $1.35$ & $209.12$ & $0.30$ & $210.77$ \\
Var-Graphite on 2-core & $10,404$ & $\textbf{26.69} \pm \textbf{2.21}$ & $1.35$ & $200.86$ & $0.30$ & $202.51$ \\
ARGA on 2-core & $10,404$ & $23.68 \pm 3.18$  & $1.35$ & $207.50$ & $0.30$ & $209.15$ \\
ARVGA on 2-core & $10,404$ &$25.98 \pm 1.93$  & $1.35$ & $199.94$ & $0.30$ & $201.59$ \\
ChebGAE on 2-core & $10,404$ & $25.88 \pm 1.66$ & $1.35$ & $410.81$ & $0.30$ & $412.46$ \\
ChebVGAE on 2-core & $10,404$ &$26.50 \pm 1.49$  & $1.35$ & $399.96$ & $0.30$ & $401.61$ \\
\midrule
DeepWalk & - &$27.23 \pm 0.32$  & - & $342.25$ & - & $342.25$ \\
LINE & - & $26.26 \pm 0.28$  & - & $63.52$ & - & $63.52$ \\
Louvain & - & $23.02 \pm 0.47$ & - & $\textbf{27.32}$ & - & $\textbf{27.32}$ \\
node2vec & - & $\textbf{29.57} \pm \textbf{0.22}$ & - & $48.91$ & - & $48.91$ \\
\bottomrule
\end{tabular}
\caption{Node clustering on Pubmed ($n=19,717$, $m =44,338$), using VGAE on all cores, Graph AE/VAE variants on 2-core, and baselines.}
\end{footnotesize}
\end{table*}


\begin{table*}
\centering
\begin{footnotesize}
\begin{tabular}{c|c|c|cccc}
\toprule
\textbf{Model}  & \textbf{Size of input} & \textbf{Mean Performance (in \%)} & \multicolumn{4}{c}{\textbf{Mean Running Times (in sec.)}}\\
& \textbf{$k$-core} & \tiny \textbf{MI} & \tiny $k$-core dec. & \tiny Model train & \tiny Propagation & \tiny \textbf{Total} \\ 
\midrule
VGAE on 14-core & $46,685$  & $\textbf{25.22} \pm \textbf{1.51}$ & $507.08$ & $6,390.37$ & $120.80$ & $7,018.25~(1\text{h}57)$ \\
on 15-core & $35,432$ & $24.53 \pm 1.62$ & $507.08$ & $2,589.95$ & $123.95$ & $3,220.98~(54\text{min})$  \\
on 16-core & $28,153$ & $24.16 \pm 1.96$ & $507.08$ & $1,569.78$ & $123.14$ & $2,200.00~(37\text{min})$ \\
on 17-core & $22,455$ & $24.14 \pm 2.01$ & $507.08$ & $898.27$ & $124.02$ & $1,529.37~(25\text{min})$ \\
on 18-core & $17,799$ & $22.54 \pm 1.98$ & $507.08$ & $\textbf{551.83}$ & $126.67$ & $\textbf{1,185.58~(20\text{min})}$ \\
\midrule
GAE on $15$-core &$35,432$ &$23.76 \pm 2.25$ & $507.08$ & $2,750.09$ & $123.95$ & $3,381.13~(56\text{min})$ \\
DeepGAE on $15$-core & $35,432$ & $24.27 \pm 1.10$ & $507.08$ & $3,007.31$ & $123.95$ & $3,638.34~(1\text{h}01)$ \\
DeepVGAE on $15$-core & $35,432$ & $24.54 \pm 1.23$ & $507.08$ & $2,844.16$ & $123.95$ & $3,475.19~(58\text{min})$ \\
Graphite on $15$-core & $35,432$ & $24.22 \pm 1.45$  & $507.08$ & $2,899.87$ &$123.95$ & $3,530.90~(59\text{min})$ \\
Var-Graphite on $15$-core & $35,432$ & $24.25 \pm 1.51$ & $507.08$ & $2,869.92$ & $123.95$ & $3,500.95~(58\text{min})$ \\
ARGA on $15$-core & $35,432$ & $24.26 \pm 1.18$  & $507.08$ & $3,013.28$ & $123.95$ & $3,644.31~(1\text{h}01)$ \\
ARVGA on $15$-core & $35,432$ &$24.76 \pm 1.32$  & $507.08$ & $2,862.54$ & $123.95$ & $3,493.57~(58\text{min})$ \\
ChebGAE on $15$-core & $35,432$ & $25.23 \pm 1.21$ & $507.08$ & $5,412.12$ & $123.95$ & $6,043.15~(1\text{h}41)$ \\
ChebVGAE on $15$-core & $35,432$ & $\textbf{25.30} \pm \textbf{1.22}$  & $507.08$ & $5,289.91$ & $123.95$ & $5,920.94~(1\text{h}38)$ \\
\midrule
LINE & - & $23.19 \pm 1.82$ & - & $33,063.80$ & - & $33,063.80~(9\text{h}11)$ \\
Louvain & - & $11.99 \pm 1.79$ & - & $13,634.16$ & - & $13,634.16~(3\text{h}47)$ \\
node2vec & - & $\textbf{24.10} \pm \textbf{1.64}$ & - & $26,126.01$ & - & $26,126.01~(7\text{h}15)$ \\
\bottomrule
\end{tabular}
\caption{Node clustering on Patent ($n = 2,745,762$, $m = 13,965,410$), using VGAE on 14 to 18 cores, graph AE/VAE variants on $15$-core, and baselines.}
\end{footnotesize}
\end{table*}
\end{document}